\newcommand{\xb}{{\bf{x}}}%
\newcommand{\zb}{{\bf{z}}}%
\newcommand{\yb}{{\bf{y}}}%
\def\ybb{{\bar{\yb}}}
\newcommand{\revB}[1]{{#1}}
\newtheorem{theorem}{Theorem}
\newtheorem{definition}{Definition}
\newtheorem{lemma}[theorem]{Lemma}
\def\rgtbox#1#2{\rule{0pt}{0pt}\phantom{#1}\hbox to0pt{\hss#2}}
\author{%
  Felix Petersen\\
  Stanford University\\
  {\small\texttt{mail@felix-petersen.de}}\\
  \And
  Christian Borgelt\\
  University of Salzburg\\
  {\small\texttt{christian@borgelt.net}}\\
  \And
  Tobias Sutter\\
  University of Konstanz\\
  {\small\texttt{tobias.sutter@uni.kn}}\\
  \And
  Hilde Kuehne\\
  Tuebingen AI Center\\
  MIT-IBM Watson AI Lab\\
  \kern-1em{\small\texttt{h.kuehne@uni-tuebingen.de}}\kern-1em\\
  \And
  Oliver Deussen\\
  University of Konstanz\\
  {\small\texttt{oliver.deussen@uni.kn}}\\
  \And
  Stefano Ermon\\
  Stanford University\\
  {\small\texttt{ermon@cs.stanford.edu}}\\
}
\begin{document}

\title{Newton Losses: Using Curvature Information \\for Learning with Differentiable Algorithms}

\maketitle

\vspace{-.75em}
\begin{abstract}

When training neural networks with custom objectives, such as ranking losses and shortest-path losses, a common problem is that they are, per se, non-differentiable.
A popular approach is to continuously relax the objectives to provide gradients, enabling learning.
However, such differentiable relaxations are often non-convex and can exhibit vanishing and exploding gradients, making them (already in \mbox{isolation}) hard to optimize.
Here, the loss function poses the bottleneck when training a deep neural network.
We present Newton Losses, a method for improving the performance of existing hard to optimize losses by exploiting their second-order information via their empirical Fisher and Hessian matrices.
Instead of training the neural network with second-order techniques, we only utilize the loss function's second-order information to replace it by a Newton Loss, while training the network with gradient descent. 
This makes our method computationally efficient.
We apply Newton Losses to eight differentiable algorithms for sorting and shortest-paths, achieving significant improvements for less-optimized differentiable algorithms, and consistent improvements, even for well-optimized differentiable algorithms.

\end{abstract}

\section{Introduction}

Traditionally, fully-supervised classification and regression learning relies on convex loss functions such as MSE or cross-entropy, which are easy-to-optimize in isolation. %
However, the need for large amounts of ground truth annotations is a limitation of fully-supervised learning; thus, weakly-supervised learning with non-trivial objectives~\cite{dehghani2017neural, cuturi2019differentiable, caron2020unsupervised, wang2020weakly, petersen2021diffsort, shukla2023unified, shvetsova2023learning} has gained popularity.
Rather than using fully annotated data, these approaches utilize problem-specific algorithmic knowledge incorporated into the loss function via a continuous relaxation. 
For example, instead of supervising ground truth values, supervision can be given in the form of ordering information (ranks), e.g., based on human preferences~\cite{swezey2021pirank, thonet2022listwise}. 
However, incorporating such knowledge into the loss can make it difficult to optimize, e.g., by making the loss non-convex in the model output, introducing bad local minima, and importantly leading to vanishing as well as exploding gradients, slowing down training~\cite{jain2017non, petersen2022monotonic}.

Loss functions that integrate problem-specific knowledge can range from rather simple contrastive losses \cite{bachman2019learning} to rather complex losses that require the integration of differentiable algorithms~\cite{cuturi2019differentiable, shvetsova2023learning, swezey2021pirank, petersen2022monotonic, petersen2022thesis}.
In this work, we primarily focus on the (harder) latter category, which allows for solving specialized tasks such as inverse rendering~\cite{liu2019soft, chen2019learning, petersen2022gendr}, learning-to-rank~\cite{burges2005ranknet, taylor2008softrank, cuturi2019differentiable, rolinek2020optimizing, petersen2021diffsort, swezey2021pirank, petersen2022monotonic, ustimenko2020stochasticrank}, self-supervised learning~\cite{caron2020unsupervised}, differentiation of optimizers~\cite{berthet2020learning, vlastelica2019differentiation}, and top-k supervision~\cite{xie2020differentiable, cuturi2019differentiable, petersen2021diffsort}.
In this paper, we summarize these loss functions under the umbrella of algorithmic losses~\cite{petersen2021learning} as they introduce algorithmic knowledge via continuous relaxations into the training objective. 

While the success of neural network training is primarily due to the backpropagation algorithm and stochastic gradient descent (SGD), there is also a promising line of work on second-order optimization for neural network training \cite{agrawal2017second, martens2015optimizing,ref:Schaul-13,frantar2021mfac,ref:Botev-17,ref:Shazeer-18, nocedal2006numerical, li2018natural, li2019affine}.
Compared to first-order methods like SGD, second-order optimization methods exhibit improved convergence rates and therefore require fewer training steps; 
however, they have two major limitations \cite{nocedal2006numerical}, namely
~(i) computing the inverse of the curvature matrix for a large and deep neural network is computationally substantially more expensive than simply computing the gradient with backpropagation, which makes second-order methods practically inapplicable in many cases~\cite{dangel2020backpack}; %
~(ii) networks trained with second-order information have been shown to exhibit reduced generalization capabilities~\cite{wadia2021whitening}.

Inspired by ideas from second-order optimization, in this work, we propose a novel method for incorporating second-order information into training with non-convex and hard to optimize algorithmic losses.
Loss functions are usually cheaper to evaluate than a neural network.
Further, loss functions operate on lower dimensional spaces than those spanned by the parameters of neural networks. 
If the loss function becomes the bottleneck in the optimization process because it is difficult to optimize,  
it suggests to use a stronger optimization method that requires fewer steps like second-order optimization.
However, as applying second-order methods to neural networks is expensive and limits generalization, we want to train the neural network with first-order SGD. 
Therefore, we propose Newton Losses, a method for locally approximating loss functions with a quadratic with second-order Taylor expansion. 
Thereby, Newton Losses provides a (locally) convex loss leading to better optimization behavior, while training the actual neural network with gradient~descent. %

For the quadratic approximation of the algorithmic losses, we propose two variants of Newton Losses:
~(i)~~\textit{Hessian-based Newton Losses}, which comprises a generally stronger method but requires an estimate of the Hessian~\cite{nocedal2006numerical}. 
Depending on the choice of differentiable algorithm, choice of relaxation, or its implementation, the Hessian may, however, not be available.
Thus,\,we\,further\,relax\,the\,method\,to 
~(ii)~~\textit{empirical~Fisher matrix-based Newton Losses}, which derive the curvature information from the empirical Fisher matrix~\cite{kunstner2019limitations}, which depends only on the gradients.
The empirical Fisher variant can be easily implemented on top of existing algorithmic losses because it does not require to compute their second derivatives, while the Hessian variant requires computation of second derivatives and leads to greater improvements when available.

We evaluate Newton Losses for an array of eight families of algorithmic losses on two popular algorithmic benchmarks: the four-digit MNIST sorting benchmark~\cite{grover2019neuralsort} and the Warcraft shortest-path benchmark~\cite{vlastelica2019differentiation}.
We find that Newton Losses leads to consistent performance improvements for each of the algorithms---for some of the algorithms (those which suffer the most from vanishing and exploding gradients) more than doubling the accuracy.

\section{Background \& Related Work}
The related work comprises algorithmic supervision losses and second-order optimization methods.
To the best of our knowledge, this is the first work combining second-order optimization of loss functions with first-order optimization of neural networks, especially for algorithmic losses.

\paragraph{Algorithmic Losses.} 
Algorithmic losses, i.e., losses that contain some kind of algorithmic component, have become quite popular in recent machine learning research.
In the domain of recommender systems, early learning-to-rank works already appeared in the 2000s~\cite{burges2005ranknet, burges2007lambdarank, taylor2008softrank}, but more recently Lee~\textit{et al.}~\cite{lee2021differentiable} proposed differentiable ranking metrics, and Swezey~\textit{et al.}~\cite{swezey2021pirank} proposed PiRank, which relies on differentiable sorting.
For differentiable sorting, an array of methods has been proposed in recent years, which includes
NeuralSort~\cite{grover2019neuralsort}, SoftSort~\cite{prillo2020softsort}, Optimal Transport Sort~\cite{cuturi2019differentiable}, differentiable sorting networks (DSN)~\cite{petersen2021diffsort}, and the relaxed Bubble Sort algorithm~\cite{petersen2021learning}.
Other works explore differentiable sorting-based top-k for applications such as differentiable image patch selection~\cite{cordonnier2021differentiable}, differentiable k-nearest-neighbor~\cite{xie2020differentiable, grover2019neuralsort}, top-k attention for machine translation~\cite{xie2020differentiable}, differentiable beam search methods~\cite{xie2020differentiable, goyal2018continuous}, survival analysis~\cite{vauvelle2023differentiable}, and self-supervised learning~\cite{shvetsova2023learning}.
But algorithmic losses are not limited to sorting: other works have considered learning shortest-paths~\cite{vlastelica2019differentiation, berthet2020learning, petersen2021learning, petersen2024generalizing}, learning 3D shapes from images and silhouettes~\cite{kato2017neural, petersen2019pix2vex, liu2019soft, chen2019learning, petersen2021learning, petersen2022style, petersen2022gendr}, learning with combinatorial solvers for NP-hard problems~\cite{vlastelica2019differentiation}, learning to classify handwritten characters based on editing distances between strings~\cite{petersen2021learning}, learning with differentiable physics simulations~\cite{hu2019difftaichi}, and learning protein structure with a differentiable simulator~\cite{ingraham2018learning}, among many others.

\paragraph{Second-Order Optimization.} 
Second-order methods have gained popularity in machine learning due to their fast convergence properties when compared to first-order methods~\cite{agrawal2017second}.
One alternative to the vanilla Newton's method are quasi-Newton methods, which, instead of computing an inverse Hessian in the Newton step (which is expensive), approximate this curvature from the change in gradients~\cite{nocedal2006numerical,ref:Byrd-16,ref:JMLR:v16:mokhtari15a}.
In addition, a number of new approximations to the pre-conditioning matrix have been proposed in the literature, i.a., \cite{martens2015optimizing,pilnaci2017newton,frantar2021mfac, petersen2023isaac, doikov2024super}. 
While the vanilla Newton method relies on the Hessian, there are variants which use the empirical Fisher matrix, which can coincide in specific cases with the Hessian, but generally exhibits somewhat different behavior. For an overview and discussion of Fisher-based methods (including natural gradient descent), see~\cite{kunstner2019limitations, martens2020new}.

\section{Newton Losses} \label{sec:Newton:loss}
\subsection{Preliminaries}
\label{sec:preliminaries}
We consider the training of a neural network $f(x; \theta)$, where $x\in\mathbb{R}^n$ is the vector of inputs, $\theta\in\mathbb{R}^d$ is the vector of trainable parameters and $y=f(x; \theta)\in\mathbb{R}^m$ is the vector of outputs. 
As per vectorization, $\xb=[x_1, \dots,x_N]^\top\in\mathbb{R}^{N\times n}$ denotes a set of $N$ input data points, and $\yb = f(\xb; \theta)\in\mathbb{R}^{N\times m}$ denotes the neural network outputs corresponding to the inputs. 
Further, let $\ell:\mathbb{R}^{N\times m}\to\mathbb{R}$ denote the loss function, and let the ``label'' information be implicitly encoded in~$\ell$. 
The reason for this choice of implicit notation is that, for many algorithmic losses, it is not just a label, e.g., it can be ordinal information between multiple data points or a set of encoded constraints.
We assume the loss function to be twice differentiable, but also present an extension for only once differentiable losses, as well as non-differentiable losses via stochastic smoothing in the remainder of the paper.

Conventionally, the parameters $\theta$ are optimized using an iterative algorithm (e.g., SGD~\cite{kiefer1952stochastic}, Adam~\cite{kingma2015adam}, or Newton's method~\cite{nocedal2006numerical}) that updates them repeatedly according to:
\begin{align}\label{eq:training:update}
    ~\,\theta_t &\leftarrow \text{One optim.~step of }~\ell(f(\xb; \theta))~\text{ wrt.~}\theta\text{ at~}\theta=\theta_{t-1}\,.\qquad\quad
\end{align}
However, in this work, we consider splitting this optimization update step into two alternating steps:
\begin{subequations} \label{eq:two:step:scheme}
\begin{align}
    \zb^{\revB{\star}}_t & \leftarrow \text{One optim.~step of }~\ell(\zb)~\text{ wrt.~}\zb\text{ at~}\zb=f(\xb; \theta_{t-1}) \,, \label{eq:training:update-z} \\ 
    \theta_t &\leftarrow \text{One optim.~step of }{\textstyle\frac12}\| \zb^{\revB{\star}}_t - f(\xb; \theta)\|_2^2\text{ wrt.~}\theta\text{ at~}\theta=\theta_{t-1} \,. \label{eq:training:update-theta}
\end{align}
\end{subequations}%

More formally, this can also be expressed via a function $\phi(\,\cdot, \cdot, \cdot\,)$ that describes one update step (its first argument is the objective to be minimized, its second argument is the variable to be optimized, and its third argument is the starting value for the variable) as follows:
\begin{equation}\label{eq:one:step:scheme-formal}
    \theta_{t} \leftarrow \phi\,(\,\,\ell(f(\xb; \theta)), \kern1.53em\,\theta,\,\kern1.53em \theta_{t-1}\,\,)~~
\end{equation}
And, for two update step functions $\phi_1$ and $\phi_2$, we can formalize \eqref{eq:two:step:scheme} to
\begin{subequations} \label{eq:two:step:scheme-formal}
\begin{align}
    \zb^{\revB{\star}}_{t} & \leftarrow \phi_1(\,\ell( \zb ), \kern1.7em \,\zb,\, \kern1.7em f(\xb;\theta_{t-1})\,)\,, \label{eq:phi-1-formal}\\ 
    \theta_{t} &\leftarrow  \phi_2(\, {\textstyle\frac12}\| \zb^{\revB{\star}}_{t} - f(\xb; \theta)\|_2^2,\,\, \theta,\,\, \theta_{t-1} \,)\,.  \label{eq:phi-2-formal}
\end{align}
\end{subequations}%

The purpose of the split is to enable us to use two different iterative optimization algorithms $\phi_1$ and $\phi_2$.
This is particularly interesting for optimization problems where the optimization of the loss function $\ell$ is a difficult optimization problem.
For standard convex losses like MSE or CE, gradient descent is a perfectly sufficient choice for $\phi_1$ (MSE will recover the goal, and CE leads to outputs $\zb^\star$ that achieve a perfect argmax classification result).
However, if there is the asymmetry of $\ell$ being harder to optimize (requiring more steps), while \eqref{eq:phi-1-formal} being much cheaper per step compared to \eqref{eq:phi-2-formal}, then the optimization of the loss \eqref{eq:phi-1-formal} comprises a bottleneck  compared to the optimization of the neural network \eqref{eq:phi-2-formal}. 
Such conditions are prevalent in the space of algorithmic supervision losses.

A similar split (for the case of splitting between the layers of a neural network, and using gradient descent for both~\eqref{eq:training:update-z} and~\eqref{eq:training:update-theta}, i.e., the requirement of $\phi_1=\phi_2$) is also utilized in the fields of biologically plausible backpropagation~\cite{meulemans2020theoretical, bengio2015towards, bengio2014auto, lee2015difference} and proximal backpropagation~\cite{frerix2018proximal}, leading to reparameterizations of backpropagation.
For SGD, we show that \eqref{eq:one:step:scheme-formal} is exactly equivalent to \eqref{eq:two:step:scheme-formal} in Lemma~\ref{lem:gradient:equiv-sm}, and for a special case of Newton's method, we show the equivalence in Lemma~\ref{lem:Newton:equiv} in the SM.
Motivated by the equivalences under the split, in the following, we consider the case of $\phi_1\neq\phi_2$.

\subsection{Method}

Equipped with the two-step optimization~\eqref{eq:two:step:scheme} / \eqref{eq:two:step:scheme-formal}, we can introduce the idea behind Newton Losses:\\[.55em]
\centerline{\emph{We propose $\phi_1$ to be Newton's method, while $\phi_2$ remains stochastic gradient descent.}}
\vskip-\parskip\vskip.55em
In the following, we formulate how we can solve optimizing \eqref{eq:training:update-z} with Newton's method, or, whenever we do not have access to the Hessian of $\ell$, using a step pre-conditioned via the empirical Fisher matrix.
This allows us to transform an original loss function $\ell$ into a Newton loss $\ell^*$, which allows optimizing $\ell^*$ with gradient descent only while maintaining equivalence to the two-step idea, and thereby making it suitable for common machine learning frameworks.

Newton's method relies on a quadratic approximation of the loss function at location ${\ybb}=f(\xb;\theta)$
\begin{equation}
\tilde\ell_\ybb(\zb) \;=\; \ell(\ybb) + (\zb - \ybb)^{\!\top} \nabla_\ybb \ell(\ybb)
      + {\textstyle\frac12} (\zb - \ybb)^{\!\top}\, \nabla^2_\ybb \ell(\ybb)\, (\zb - \ybb)\,,
\end{equation}
and sets its derivative to $0$ to find the location~$\zb^\star$ of the stationary point of $\tilde\ell_\ybb(\zb)$: %
\begin{equation}
    \nabla_{\zb^\star} \tilde\ell_\ybb(\zb^\star) = 0 
    ~~\Leftrightarrow~~ \nabla_\ybb \ell(\ybb) + \nabla^2_\ybb \ell(\ybb) (\zb^\star - \ybb) = 0
    ~~\Leftrightarrow~~ \zb^\star = \ybb - (\nabla^2_\ybb \ell(\ybb))^{-1} \nabla_\yb \ell (\ybb).
\end{equation}
However, when $\ell$ is non-convex or the smallest eigenvalues of $\nabla^2_\ybb  \ell(\ybb)$ either become negative or zero, this $\zb^\star$ 
may not be a good proxy for a minimum of $\ell$, but may instead be any other stationary point or lie far away from $\ybb$, leading to exploding gradients downstream.
To resolve this issue, we introduce Tikhonov regularization~\cite{ref:Tikhonov-77} with a strength of $\lambda$, which leads to a well-conditioned curvature matrix:
\begin{equation}\label{eq:first-with-tik}
    \zb^\star = \ybb - (\nabla^2_\ybb \ell(\ybb) + \lambda \cdot \mathbf{I})^{-1} \, \nabla_\ybb \ell (\ybb)\,. 
\end{equation}
Using $\zb^\star$, we can plug the solution into \eqref{eq:training:update-theta} to find the Newton loss $\ell^*$ and compute its derivative as
\begin{equation}
    \ell_{\zb^\star}^*(\yb) = {\textstyle\frac12} (\zb^\star - \yb)^\top (\zb^\star - \yb) = {\textstyle\frac12} \left\| {\zb^\star - \yb} \right\|_2^2
    \qquad\text{and}\qquad \nabla_\yb \ell_{\zb^\star}^*(\yb) = \yb - \zb^\star \,.
    \label{eq:newton-loss-grad-first}
\end{equation}
Here, as in Section~\ref{sec:preliminaries}, $\yb = f(\xb, \theta)$.
Via this construction, we obtain the Newton loss $\ell_{\zb^\star}^*$, a new convex loss, which itself has a gradient that corresponds to one Newton step of the original loss.
In particular, on $\yb$, one gradient descent step on the Newton loss~\eqref{eq:newton-loss-grad-first} reduces to
\begin{equation}
    \yb ~~\gets~~ \yb - \eta\cdot\nabla_\yb \ell_{\zb^\star}^*(\yb) = \yb - \eta\cdot(\yb - \zb^\star)
    ~~=~~ \yb - \eta\cdot(\nabla^2_\yb  \ell(\yb) + \lambda \cdot \mathbf{I})^{-1} \, \nabla_\yb \ell (\yb)\,,
\end{equation}
which is exactly one step of Newton's method on~$\yb$.
Thus, we can optimize the Newton loss $\ell_{\zb^\star}^*(f(\xb;\theta))$ with gradient descent, and obtain equivalence to the proposed concept.

In the following definition, we summarize the resulting equations that define the Newton loss $\ell^*_{\zb^\star}$.
\begin{definition}[Newton Losses (Hessian)]
\label{def:hessian-newton-loss}
    For a loss function $\ell$ and a given current parameter vector~$\theta$, we define the Hessian-based Newton loss via the empirical Hessian as
    \begin{equation}
    \ell^*_{\zb^\star}(\yb) = {\textstyle\frac{1}{2}}\|\zb^\star - \yb\|^2_2 
        \qquad \text{where} \qquad
    z_i^\star ={\bar y_i} - \Big({\textstyle\frac1N\sum_{j=1}^N} \nabla^2_{\bar y_j} \ell({\bar \yb}) + \lambda\mathbf{I}\Big)^{\!-1} \nabla_{\bar y_i} \ell({\bar \yb})
    \end{equation}
    for all~~$i\in\{1,...,N\}$~~and~~$\bar \yb = f(\xb;\theta)$.
\end{definition}
\vspace{-.3em}
We remark that computing and inverting the Hessian of the loss function is usually computationally efficient.
(We remind the reader that the Hessian of the loss function is the second derivative wrt.~the inputs of the loss function and we further remind that the inputs to the loss are \textbf{not} the neural network parameters / weights.)
Whenever the Hessian matrix of the loss function is not available, whether it may be due to limitations of a differentiable algorithm, large computational cost, lack of a respective implementation of the second derivative, etc., we may resort to using the empirical Fisher matrix (i.e., the second uncentered moments of the gradients) as a source for curvature information.
We remark that the empirical Fisher matrix is not the same as the Fisher information matrix~\cite{kunstner2019limitations}, and that the Fisher information matrix is generally not available for algorithmic losses.
While the empirical Fisher matrix, as a source for curvature information, may be of lower quality than the Hessian matrix, it has the advantage that it can be computed from the gradients, i.e.,
\begin{equation}
    \mathbf{F} = \mathbb{E}_{x}\left[ \nabla_{\!f(x, \theta)}\, \ell(f(x, \theta)) \cdot \nabla_{\!f(x, \theta)}\, \ell(f(x, \theta))^{\!\top} \right].
\end{equation}
This means that, assuming a moderate dimension of the prediction space $m$, computing the empirical Fisher comes at no significant overhead and may, conveniently, be performed in-place as we discuss later.
Again, we regularize the matrix via Tikhonov regularization with strength $\lambda$ and can, accordingly, define the empirical Fisher-based Newton loss as follows.
\begin{definition}[Newton Loss (Fisher)]
\label{def:fisher-newton-loss}
    For a loss function $\ell$, and a given current parameter vector $\theta$, we define the empirical Fisher-based Newton loss as
    \[ \ell^*_{\zb^\star}(\yb) = {\textstyle\frac{1}{2}}\|\zb^\star - \yb\|^2_2 
       \qquad \text{where} \qquad
       z_i^\star = \bar y_i - \Big( {\textstyle\frac1N\sum_{j=1}^N} \nabla_{\bar y_j} \ell({\bar \yb})\, \nabla_{\bar y_j} \ell({\bar \yb})^{\!\top} {+} \lambda\mathbf{I} \Big)^{\!-1} \nabla_{\bar y_i} \ell({\bar \yb}) %
    \]
    for all~~$i\in\{1,...,N\}$~~and~~$\bar \yb = f(\xb;\theta)$.
\end{definition}
\vspace{-.3em}

Before continuing with the implementation, integration, and further computational considerations, we can make an interesting observation.
In the case of using the trivial MSE loss, i.e., $\ell(y) = \frac{1}{2}\|y - y^\star\|^2_2$ where $y^\star$ denotes a ground truth, the Newton loss collapses to the original MSE loss. 
This illustrates that Newton Losses requires non-trivial original losses.
Another interesting aspect is the arising fixpoint---the Newton loss of a Newton loss is equivalent to a simple Newton loss.
\label{sec:last-paragraph-of-method-for-mse-trivial}

\begin{figure*}[t]
\vspace{-1.em}
\begin{minipage}[h]{190pt}
\begin{algorithm}[H]
\vspace{.25em}
{
\begin{minted}[escapeinside=||,fontsize=\small,fontfamily=cmtt]{python}
# Python style pseudo-code
model = ...      # neural network
loss = ...       # original loss fn
optimizer = ...  # optim. of model
tik_l = ...      # hyperparameter

for data, label in data_loader:
  # apply a neural network model
  y = model(data)

  # compute gradient of orig. loss
  grad = gradient(loss(y, label), y)
  # compute Hessian (or alt. Fisher)
  hess = hessian(loss(y, label), y)
  
  # compute the projected optimum
  z_star = (y - grad @ inverse(hess 
    + tik_l * eye(g.shape[1]))).detach()

  # compute the Newton loss
  l = MSELoss()(y, z_star)

  # backpropagate and optim. step
  l.backward()
  optimizer.step()
\end{minted}
\caption{\small Training with a Newton Loss
\label{algo:newton-loss}}
}
\end{algorithm}
\end{minipage}\hfill%
\begin{minipage}[h]{190pt}\small
\begin{algorithm}[H]
\vspace{.25em}
{
\begin{minted}[escapeinside=||,fontsize=\small,fontfamily=cmtt]{python}
# implements the Fisher-based Newton 
# loss via an injected modification
# of the backward pass:
class InjectFisher(AutoGradFunction):
  def forward(ctx, x, tik_l):
    assert len(x.shape) == 2
    ctx.tik_l = tik_l
    return x
    
  def backward(ctx, g):
    fisher = g.T @ g * g.shape[0]
    input_grad = g @ inverse(fisher 
        + ctx.tik_l * eye(g.shape[1]))
    return input_grad, None

for data, label in data_loader:
  # apply a neural network model
  y = model(data)
  # inject the Fisher backward mod.
  y = InjectFisher.apply(y, tik_l)
  # compute the original loss
  l = loss(y, label)
  # backpropagate and optim. step
  l.backward()
  optimizer.step()
\end{minted}
\caption{\small Training with InjectFisher\label{algo:fisher-loss}}
}
\end{algorithm}
\end{minipage}
\end{figure*}%

\subsection{Implementation}

After introducing Newton Losses, in this section, we discuss aspects of implementation %
and illustrate its implementations in Algorithms~\ref{algo:newton-loss} and~\ref{algo:fisher-loss}.
Whenever we have access to the Hessian matrix of the algorithmic loss function, it is generally favorable to utilize the Hessian-based approach (Algo.~\ref{algo:newton-loss} / Def.~\ref{def:hessian-newton-loss}), whereas we can utilize the empirical Fisher-based approach (Algo.~\ref{algo:fisher-loss} / Def.~\ref{def:fisher-newton-loss}) in any case.

In Algorithm~\ref{algo:newton-loss}, the difference to regular training is that we use the original {\fontfamily{cmtt}\selectfont loss} only for the computation of the gradient ({\fontfamily{cmtt}\selectfont grad}) and the Hessian matrix ({\fontfamily{cmtt}\selectfont hess}) of the original loss.
Then we compute $\zb^\star$ ({\fontfamily{cmtt}\selectfont z\_star}). 
Here, depending on the automatic differentiation framework, we need to ensure not to backpropagate through the target {\fontfamily{cmtt}\selectfont z\_star}, which may be achieved, e.g., via ``{\fontfamily{cmtt}\selectfont .detach()}'' or ``{\fontfamily{cmtt}\selectfont .stop\_gradient()}'', depending on the choice of library.
Finally, the Newton loss {\fontfamily{cmtt}\selectfont l} may be computed as the squared / MSE loss between the model output {\fontfamily{cmtt}\selectfont y} and {\fontfamily{cmtt}\selectfont z\_star} and an optimization step on {\fontfamily{cmtt}\selectfont l} may be performed.~~
We note that, while we use a {\fontfamily{cmtt}\selectfont label} from our {\fontfamily{cmtt}\selectfont data\_loader}, this label may be empty or an abstract piece of information for the differentiable algorithm; in our experiments, we use ordinal relationships between data points as well as shortest-paths on graphs. 

In Algorithm~\ref{algo:fisher-loss}, we show how to apply the empirical Fisher-based Newton Losses.
In particular, due to the empirical Fisher matrix depending only on the gradient, we can compute it in-place during the backward pass / backpropagation, which makes this variant particularly simple and efficient to apply.
This can be achieved via an injection of a custom gradient right before applying the original {\fontfamily{cmtt}\selectfont loss}, which replaces the gradient in-place by a gradient that corresponds to Definition~\ref{def:fisher-newton-loss}.
The injection is performed by the {\fontfamily{cmtt}\selectfont InjectFisher} function, which corresponds to an identity during the forward pass but replaces the gradient by the gradient of the respective empirical Fisher-based Newton loss.

In both cases, the only additional hyperparameter to specify is the Tikonov regularization strength $\lambda$ ({\fontfamily{cmtt}\selectfont tik\_l}).~
$\lambda$ heavily depends on the algorithmic loss function, particularly, on the magnitude of gradients provided by the algorithmic loss, which may vary drastically between different methods and implementations.
Other factors may be the choice of Hessian / Fisher, the dimension of outputs $m$, the batch size $N$.
Notably, for very large $\lambda$, the direction of the gradient becomes more similar to regular gradient descent, and for smaller $\lambda$, the effect of Newton Losses increases. 
We provide an ablation study for $\lambda$ in Section~\ref{sec:ablation}.

\section{Experiments\protect\footnote{Our implementation is openly available at \href{https://github.com/Felix-Petersen/newton-losses}{github.com/Felix-Petersen/newton-losses}.}}
For the experiments, we apply Newton Losses to eight methods for differentiable algorithms and evaluate them on two established benchmarks for algorithmic supervision, i.e., problems where an algorithm is applied to the predictions of a model and only the outputs of the algorithm are supervised.
Specifically, we focus on the tasks of ranking supervision and shortest-path supervision because they each have a range of established methods for evaluating our approach.
In ranking supervision, only the relative order of a set of samples is known, while their absolute values remain unsupervised. 
The established benchmark for differentiable sorting and ranking algorithms is the multi-digit MNIST sorting benchmark~\cite{grover2019neuralsort, prillo2020softsort, cuturi2019differentiable, petersen2021diffsort, petersen2022monotonic}.
In shortest-path supervision, only the shortest-path of a graph is supervised, while the underlying cost matrix remains unsupervised.
The established benchmark for differentiable shortest-path algorithms is the Warcraft shortest-path benchmark~\cite{vlastelica2019differentiation, berthet2020learning, petersen2021learning}.
As these tasks require backpropagating through conventionally non-differentiable algorithms, the respective approaches make the ranking or shortest-path algorithms differentiable such that they can be used as part of the loss.

\subsection{Ranking Supervision}
\label{sec:ranking-supervision}

In this section, we explore ranking supervision \cite{grover2019neuralsort} with an array of differentiable sorting-based losses.
Here, we use the four-digit MNIST sorting benchmark \cite{grover2019neuralsort}, where sets of $n$ four-digit MNIST images are given, and the supervision is the relative order of these images corresponding to the displayed value, while the absolute values remain unsupervised.
The goal is to learn a CNN that maps each image to a scalar value in an order preserving fashion. 
As losses, we use sorting supervision losses based on the NeuralSort~\cite{grover2019neuralsort}, the SoftSort~\cite{prillo2020softsort}, the logistic Differentiable Sorting Network~\cite{petersen2021diffsort}, and the monotonic Cauchy DSN~\cite{petersen2022monotonic}.
\textit{NeuralSort} and \textit{SoftSort} work by mapping an input list (or vector) of values to a differentiable permutation matrix that is row-stochastic and indicates the order / ranking of the inputs. 
\textit{Differentiable Sorting Networks} offer an alternative to NeuralSort and SoftSort. 
DSNs are based on sorting networks, a classic family of sorting algorithms that operate by conditionally swapping elements. 
By introducing perturbations, DSNs relax the conditional swap operator to a differentiable conditional swap and thereby continuously relax the sorting and ranking operators. 
We discuss the background of each of these diff.\ sorting and ranking algorithms in greater detail in Supplementary Material~\ref{sec:algorithmic-losses}.

\begin{wrapfigure}[13]{r}{.5775\linewidth}
    \centering
    \vspace{-1.1em}
    \includegraphics[width=\linewidth]{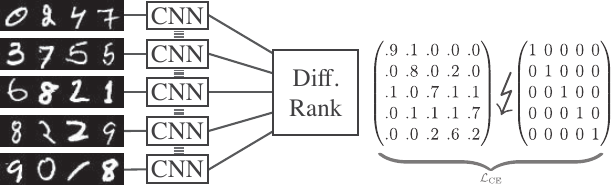}
    \captionof{figure}{
        Overview over ranking supervision with a differentiable sorting / ranking algorithm. 
        A set of input images is (element-wise) processed by a CNN, producing a scalar for each image. The scalars are sorted / ranked by the differentiable ranking algorithm, which returns the differentiable permutation matrix, which is compared to the ground truth permutation matrix.  
    }
    \vspace{-1em}
\end{wrapfigure}

\vspace{-.5em}
\paragraph{Setups.}
The sorting supervision losses are cross-entropy losses defined between the differentiable permutation matrix produced by a respective differentiable sorting operator and the ground truth permutation matrix corresponding to a ground truth ranking.
The Cauchy DSN may be an exception to the hard to optimize classification as it is quasi-convex~\cite{petersen2022monotonic}.
We evaluate the sorting benchmark for numbers of elements to be ranked $n\in\{5,10\}$ and use the percentage of rankings correctly identified as well as percentage of individual element ranks correctly identified as evaluation metrics.
For each of the four original baseline methods, we compare it to two variants of their Newton losses: the empirical Hessian and the empirical Fisher variant.
For each setting, we train the CNN on 10 seeds using the Adam optimizer~\cite{kingma2015adam} at a learning rate of $10^{-3}$ for $10^5$ steps and batch size of $100$.

\newcommand{\sa}[2]{{$#1$\scalebox{.8}{$\pm#2$}}}             %
\newcommand{\sd}[2]{{\color{black!70!white}\pmb{\sa{#1}{#2}}}}  %
\newcommand{\sg}[2]{\pmb{\sa{#1}{#2}}}                          %
\newcommand{\sap}[2]{$($\sa{#1}{#2}$)$}
\newcommand{\sdp}[2]{$($\sd{#1}{#2}$)$}
\newcommand{\sgp}[2]{$($\sg{#1}{#2}$)$}

\begin{table*}
\centering
    \small
    \captionof{table}{
    Ranking supervision with differentiable sorting. 
    The metric is the percentage of rankings correctly identified (and individual element ranks correctly identified, in parentheses) avg.\ over $10$ seeds.
    Statistically significant improvements (sig.\ level $0.05$) are indicated bold black; improved means are indicated in bold grey.
    \label{tab:mnist-softsort-neuralsort-diffsort}
    }
    \vspace{-.25em}
    \setlength{\tabcolsep}{2.5pt}
    \resizebox{\linewidth}{!}%
    {
    \begin{tabular}{lcccccccc}
    \toprule
$\pmb{n=5}$ & \multicolumn{2}{c}{NeuralSort\,\cite{grover2019neuralsort}} & \multicolumn{2}{c}{SoftSort\,\cite{prillo2020softsort}} & \multicolumn{2}{c}{Logistic\,DSN\,\cite{petersen2021diffsort}} & \multicolumn{2}{c}{Cauchy\,DSN\,\cite{petersen2022monotonic}} \\
\cmidrule(r){1-1}\cmidrule(l){2-3}\cmidrule(l){4-5}\cmidrule(l){6-7}\cmidrule(l){8-9}
Baseline        & \sa{71.33}{2.05} & \sap{87.10}{0.96}         & \sa{70.70}{2.60} & \sap{86.75}{1.26}         & \sa{53.56}{18.04} & \sap{77.04}{10.30}         & \sa{85.09}{0.77} & \sap{93.31}{0.39}      \\
NL (Hessian)    & \sg{83.31}{1.70} & \sgp{92.54}{0.73}         & \sg{83.87}{0.81} & \sgp{92.72}{0.39}         & \sg{75.02}{12.59} & \sgp{88.53}{06.00}         & \sd{85.11}{0.78} & \sap{93.31}{0.34}       \\
NL (Fisher)     & \sg{83.93}{0.62} & \sgp{92.80}{0.30}         & \sg{84.03}{0.59} & \sgp{92.82}{0.24}         & \sd{63.11}{30.63} & \sdp{79.28}{22.16}         & \sa{84.95}{0.79} & \sap{93.25}{0.37}               \\
\bottomrule
\toprule
$\pmb{n=10}$ & \multicolumn{2}{c}{NeuralSort} & \multicolumn{2}{c}{SoftSort} & \multicolumn{2}{c}{Logistic DSN~} & \multicolumn{2}{c}{Cauchy DSN} \\
\cmidrule(r){1-1}\cmidrule(l){2-3}\cmidrule(l){4-5}\cmidrule(l){6-7}\cmidrule(l){8-9}
Baseline        & \sa{24.26}{01.52} & \sap{74.47}{0.83}         & \sa{27.46}{3.58} & \sap{76.02}{1.92}      & \sa{12.31}{10.22} & \sap{58.81}{16.79}         & \sa{55.29}{2.46} & \sap{87.06}{0.85}        \\
NL (Hessian)    & \sg{48.76}{05.88} & \sgp{84.83}{2.13}         & \sg{55.07}{1.08} & \sgp{86.89}{0.31}      & \sg{42.14}{22.30} & \sgp{75.35}{23.77}         & \sd{56.49}{1.02} & \sdp{87.44}{0.40}        \\
NL (Fisher)     & \sg{39.23}{11.38} & \sgp{81.14}{4.91}         & \sg{54.00}{2.24} & \sgp{86.56}{0.68}      & \sd{25.72}{27.42} & \sap{52.18}{36.51}         & \sd{56.12}{1.86} & \sdp{87.35}{0.65}        \\
\bottomrule
    \end{tabular}
    }
    \vspace{-1.25em}
\end{table*}

\vspace{-.5em}
\paragraph{Results.}
As displayed in Table~\ref{tab:mnist-softsort-neuralsort-diffsort}, we can see that---for each original loss---Newton Losses improve over their baselines.
For NeuralSort, SoftSort, and Logistic DSNs, we find that using the Newton losses substantially improves performance.
Here, the reason is that these methods suffer from vanishing and exploding gradients, especially for the more challenging case of $n=10$.
As expected, we find that the Hessian Newton Loss leads to better results than the Fisher variant, except for NeuralSort and SoftSort in the easy setting of $n=5$, where the results are nevertheless quite close.
Monotonic differentiable sorting networks, i.e., the Cauchy DSNs, provide an improved variant of DSNs, which have the property of quasi-convexity and have been shown to exhibit much better training behavior out-of-the-box, which makes it very hard to improve upon the existing results.
Nevertheless, Hessian Newton Losses are on-par for the easy case of $n=5$ and, notably, improve the performance by more than $1\%$ on the more challenging case of $n=10$.
To explore this further, we additionally evaluate the Cauchy DSN for $n=15$ (not displayed in the table): here, the baseline achieves \sa{30.84}{2.74} \sap{82.30}{1.08}, whereas, using NL (Fisher), we improve it to \sa{32.30}{1.22} \sap{82.78}{0.53}, showing that the trend of increasing improvements with more challenging settings (compared to smaller $n$) continues.
Summarizing, we obtain strong improvements on losses that are hard to optimize, while in already well-behaving cases the improvements are smaller.
This perfectly aligns with our goal of improving performance on losses that are hard to optimize.

\begin{wrapfigure}[9]{r}{.55\linewidth}
    \centering
    \vskip-6.ex
    \includegraphics[width=2.5cm]{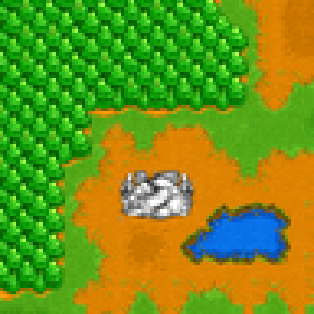}\hfill
    \includegraphics[width=2.5cm]{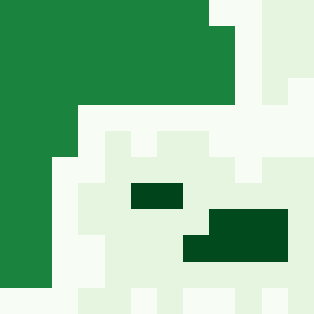}\hfill
    \includegraphics[width=2.5cm]{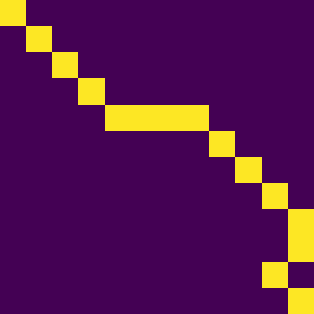}
    \captionof{figure}{
    $12\times12$ Warcraft shortest-path problem. An input terrain map (left), unsupervised ground truth cost embedding (center) and ground truth supervised shortest path (right). 
    \label{fig:shortest-path-imgs}
    }
\end{wrapfigure}

\subsection{Shortest-Path Supervision}

In this section, we apply Newton Losses to the shortest-path supervision task of the $12\times12$ Warcraft shortest-path benchmark~\cite{vlastelica2019differentiation, berthet2020learning, petersen2021learning}.
Here, $12\times12$ Warcraft terrain maps are given as $96\times96$ RGB images (e.g., Figure~\ref{fig:shortest-path-imgs} left) and the supervision is the shortest path from the top left to the bottom right (Figure~\ref{fig:shortest-path-imgs} right) according to a hidden cost embedding (Figure~\ref{fig:shortest-path-imgs} center).
The hidden cost embedding is not available for training.
The goal is to predict $12\times12$ cost embeddings of the terrain maps such that the shortest path according to the predicted embedding corresponds to the ground truth shortest path.
Vlastelica~et~al.~\cite{vlastelica2019differentiation} have shown that integrating an algorithm in the training pipeline substantially improves performance compared to only using a neural network with an easy-to-optimize loss function, which has been confirmed by subsequent work~\cite{berthet2020learning, petersen2021learning}.
For this task, we explore a set of families of algorithmic supervision approaches: 
\textit{Relaxed Bellman-Ford}~\cite{petersen2021learning} is a shortest-path algorithm relaxed via the AlgoVision framework, which continuously relaxes algorithms by perturbing all accessed variables with logistic distributions and approximating the expectation value in closed form. 
\textit{Stochastic Smoothing}~\cite{abernethy2016perturbation} is a sampling-based differentiation method that can be used to relax, e.g., a shortest-path algorithm by perturbing the input with probability distribution.
\textit{Perturbed Optimizers with Fenchel-Young Losses}~\cite{berthet2020learning} build on stochastic smoothing and Fenchel-Young losses~\cite{blondel2020learning} and identify the argmax to be the differential of max, which allows a simplification of stochastic smoothing, again applied, e.g., to shortest-path learning problems.
We use the same hyperparameters as shared by previous works~\cite{berthet2020learning, petersen2021learning}.
In particular, it is notable that, throughout the literature, the benchmark assumes a training duration of $50$ epochs and a learning rate decay by a factor of $10$ after $30$ and $40$ epochs each. 
Thus, we do not deviate from these constraints.

\begin{wrapfigure}[6]{r}{.6\linewidth}
    \centering
    \vspace{-4.5em}
    \captionof{table}{
        Shortest-path benchmark results for different variants of the AlgoVision-relaxed Bellman-Ford algorithm~\cite{petersen2021learning}.
        The metric is the percentage of perfect matches averaged over $10$ seeds.
        Significant improvements are bold black, and improved means are bold grey.
    \label{tab:shortest-path-algovision}
    }
    \resizebox{\linewidth}{!}{\footnotesize
    \begin{tabular}{lcccc}
    \toprule
Variant     &      For+$L_1$ & For+$L_2^2$ & While+$L_1$ & While+$L_2^2$    \\
    \midrule
Baseline        &     \sa{94.19}{0.33} & \sa{95.90}{0.21} & \sa{94.30}{0.20} & \sa{95.77}{0.41} \\
NL (Fisher)     &     \sg{94.52}{0.34} & \sd{96.08}{0.46} & \sd{94.47}{0.34} & \sd{95.94}{0.27} \\
\bottomrule
    \end{tabular}
    }
\end{wrapfigure}

\subsubsection{Relaxed Bellman-Ford} 
The~relaxed Bellman-Ford algorithm~\cite{petersen2021learning} is a continuous relaxation of the Bellman-Ford algorithm via the AlgoVision library.
To increase the number of settings considered, we explore four sub-variants of the algorithm:
For+$L_1$, For+$L_2^2$, While+$L_1$, and While+$L_2^2$. 
Here, For / While refers to the distinction between using a While and For loop in Bellman-Ford, while $L_1$ vs.~$L_2^2$ refer to the choice of metric between shortest paths.
As computing the Hessian of the AlgoVision Bellman-Ford algorithm is too expensive with the PyTorch implementation, for this evaluation, we restrict it to the empirical Fisher-based Newton loss.
The results  displayed in Table~\ref{tab:shortest-path-algovision}. %
While the differences are rather small, as the baseline here is already strong, we can observe improvements in all of the four settings and in one case achieve a significant improvement.
This can be attributed to ~(i)~the high performance of the baseline algorithm on this benchmark, and ~(ii)~that only the empirical Fisher-based Newton loss is available, which is not as strong as the Hessian variant.

\subsubsection{Stochastic Smoothing}

After discussing the analytical relaxation, we continue with stochastic smoothing approaches.
First, we consider stochastic smoothing~\cite{abernethy2016perturbation}, which allows perturbing the input of a function with an exponential family distribution to estimate the gradient of the smoothed function.
For a reference on stochastic smoothing with a focus on differentiable algorithms, we refer to the author's recent work~\cite{petersen2024generalizing}.
For the baseline, we apply stochastic smoothing to a hard non-differentiable Dijkstra algorithm based loss function to relax it via Gaussian noise (``SS of loss'').
We utilize variance reduction via the method of covariates.
As we detail in Supplementary Material~\ref{sec:stoch-smooth-sm}, stochastic smoothing can also be used to estimate the Hessian of the smoothed function.
Based on this result, we can construct the Hessian-variant Newton loss.
As an extension to stochastic smoothing, we apply stochastic smoothing only to the non-differentiable Dijstra algorithm (thereby computing its Jacobian matrix) but use a differentiable loss to compare the predicted relaxed shortest-path to the ground truth shortest-path (``SS of algorithm'').
In this case, the Hessian Newton loss is not applicable because the output of the smoothed algorithm is high dimensional and the Hessian of the loss becomes intractable. An extended discussion of the ``SS of algorithm'' fomulation can be found in SM~\ref{sm:ss-of-algo}.
Nevertheless, we can apply the Fisher-based Newton loss.
We evaluate both approaches for $3$, $10$, and $30$ samples.

\begin{wrapfigure}[16]{r}{.49\linewidth}
    \centering
    \vspace{-1.em}
    \includegraphics[width=1.03\linewidth]{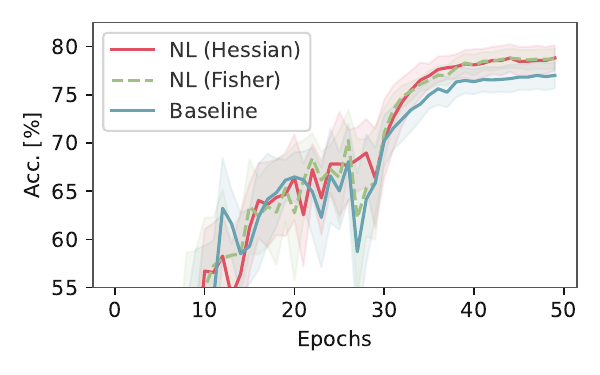}
    \vspace{-1.75em}
    \caption{Test accuracy (perfect matches) plot for `SS of loss' with $10$ samples on the Warcraft shortest-path benchmark.
    Lines show the mean and shaded areas show the 95\% conf.\ intervals.
    \label{fig:ss-training}}
    \vspace{-.5em}
\end{wrapfigure}

In Table~\ref{tab:shortest-path-stochastic}, we can observe that Newton Losses improves the results for stochastic smoothing in each case with more than $3$ samples.
The reason for the poor performance on $3$ samples is that the Hessian or empirical Fisher, respectively, is estimated using only $3$ samples, which makes the estimate unstable.
For $10$ and $30$ samples, the performance improves compared to the original method.
In Figure~\ref{fig:ss-training}, we display a respective accuracy plot.
When comparing ``SS of loss'' and ``SS of algorithm'', we can observe that the extension to smoothing only the algorithm improves performance for at least $10$ samples.
Here, the reason, again, is that smoothing the algorithm itself requires estimating the Jacobian instead of only the gradient; thus, a larger number of samples is necessary; however starting at $10$ samples, smoothing the algorithm performs better, which means that the approach is better at utilizing a given sample budget.

\begin{table*}[t]
    \centering
    \vspace{-1em}
    \caption{
        Shortest-path benchmark results for the stochastic smoothing of the loss (including the algorithm), stochastic smoothing of the algorithm (excluding the loss), and perturbed optimizers with the Fenchel-Young loss.
        The metric is the percentage of perfect matches averaged over $10$ seeds. 
        Significant improvements are bold black, and improved means are bold grey.
    }
    \vspace{-.25em}
    \resizebox{\linewidth}{!}%
    {\small
    \setlength{\tabcolsep}{2pt}
    \begin{tabular}{lccccccccc}
    \toprule
Method          & \multicolumn{3}{c}{SS of loss} & \multicolumn{3}{c}{SS of algorithm} & \multicolumn{3}{c}{PO~w/ FY loss} \\
\cmidrule(r){1-1}\cmidrule(l){2-4}\cmidrule(l){5-7}\cmidrule(l){8-10}
\# Samples      & 3 & 10 & 30 & 3 & 10 & 30 & 3 & 10 & 30 \\
\midrule
Baseline              & \sd{62.83}{5.29} & \sa{77.01}{2.18} & \sa{85.48}{1.23}       & \sd{57.55}{4.58} & \sa{78.70}{1.90} & \sa{87.26}{1.50}            & \sa{80.64}{0.75} & \sa{80.39}{0.57} & \sa{80.71}{1.28} \\ 
NL (Hessian)          & \sa{62.40}{5.48} & \sg{78.82}{2.12} & \sd{85.94}{1.33}       & --- & --- & ---                                                   & \sg{83.09}{3.11} & \sd{81.13}{3.58} & \sg{83.45}{2.21} \\ 
NL (Fisher)           & \sa{58.80}{5.10} & \sg{78.74}{1.68} & \sd{86.10}{0.60}       & \sa{53.82}{8.45} & \sd{79.24}{1.78} & \sd{87.41}{1.13}            & \sd{80.70}{0.65} & \sa{80.37}{0.98} & \sa{80.45}{0.78} \\ 
\bottomrule
    \end{tabular}
    }
    \vspace{-0.25em}
    \label{tab:shortest-path-stochastic}
\end{table*}

\subsubsection{Perturbed Optimizers with Fenchel-Young Losses}
Perturbed optimizers with a Fenchel-Young loss~\cite{berthet2020learning} is a formulation of solving the shortest path problem as an $\arg\max$ problem, and differentiating this problem using stochastic smoothing-based perturbations and a Fenchel-Young loss.
By extending their formulation to computing the Hessian of the Fenchel-Young loss, we can compute the Newton loss, and find that we can achieve improvements of more than $2\%$.
However, for Fenchel-Young losses, which are defined via their derivative, the empirical Fisher is not particularly meaningful, leading to equivalent performance between the baseline and the Fisher Newton loss.
Berthet~\textit{et al.}~\cite{berthet2020learning} mention that their approach works well for small numbers of samples, which we can confirm as seen in Table~\ref{tab:shortest-path-stochastic} where the accuracy is similar for each number of samples.
An interesting observation is that perturbed optimizers with Fenchel-Young losses perform better than stochastic smoothing in the few-sample regime, whereas stochastic smoothing performs better with larger numbers of samples.

\subsection{Ablation Study}
\label{sec:ablation}

In this section, we present our ablation study for the (only) hyperparameter $\lambda$.
$\lambda$ is the strength of the Tikhonov regularization (see, e.g., Equation~\ref{eq:first-with-tik}, or {\fontfamily{cmtt}\selectfont tik\_l} in the algorithms). This parameter is important for controlling the degree to which second-order information is used as well as regularizing the curvature.
For the ablation study, we use the experimental setting from Section~\ref{sec:ranking-supervision} for NeuralSort and SoftSort and $n=5$.
In particular, we consider $13$ values for $\lambda$, exploring the range from $0.001$ to $1000$ and plot the element-wise ranking accuracy (individual element ranks correctly identified) in Figure~\ref{fig:lambda-ablation-sort}. We display the average over $10$ seeds as well as each seed's result individually with low opacity. 
We can observe that Newton Losses are robust over many orders of magnitude for the hyperparameter $\lambda$. Note the logarithmic axis for $\lambda$ in Figure~\ref{fig:lambda-ablation-sort}.
In general, we observe that choices within a few orders of magnitude around $1$ are generally favorable.
Further, we observe that NeuralSort is more sensitive to drastic changes in $\lambda$ compared to SoftSort.

\begin{figure*}[b]
    \centering
    \includegraphics[width=.495\linewidth]{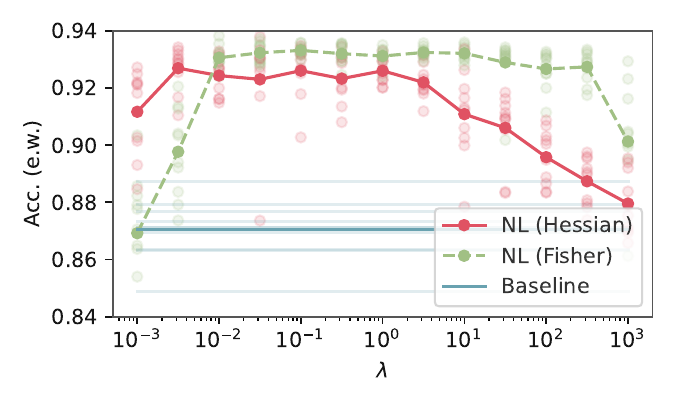}\hfill%
    \includegraphics[width=.495\linewidth]{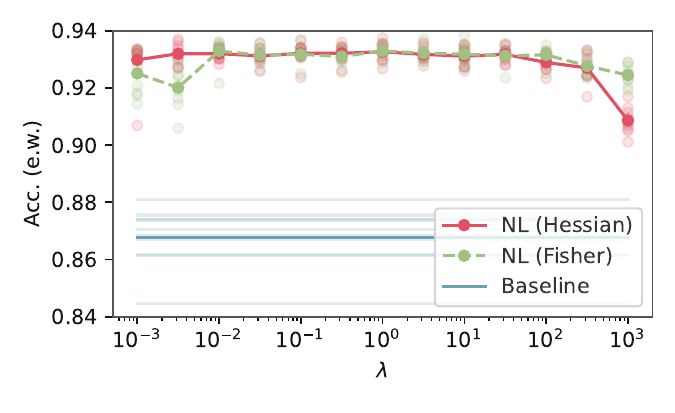}%
    \vspace{-1.25em}
    \caption{Ablation study wrt.~the Tikhonov regularization strength hyperparameter $\lambda$. Displayed is the element-wise ranking accuracy (individual element ranks correctly identified), averaged over $10$ seeds, and additionally each seed with low opacity in the background. \textbf{Left}: NeuralSort. \textbf{Right}: SoftSort. Each for $n=5$. Newton Losses, and for both the Hessian and the Fisher variant, significantly improve over the baseline for up to (or beyond) 6 orders of magnitude in variation of its hyperparameter $\lambda$. Note the logarithmic horizontal axis.}
    \vspace{-1em}
    \label{fig:lambda-ablation-sort}
\end{figure*}

\subsection{Runtime Analysis}

We provide tables with runtimes for the experiments in Supplementary Material~\ref{sec:runtimes}. 
We can observe that the runtimes between the baseline and empirical Fisher-based Newton Losses are indistinguishable for all cases.
For the analytical relaxations of differentiable sorting algorithms, where the computation of the Hessian can become expensive with automatic differentiation (i.e., without a custom derivation of the Hessian and without vectorized Hessian computation), we observed overheads between $10\%$ and $2.6\times$.
For all stochastic approaches, we observe indistinguishable runtimes for Hessian-based Newton Losses.
In summary, applying the Fisher variant of Newton Losses has a minimal computational overhead, whereas, for the Hessian variant, any overhead depends merely on the computation of the Hessian of the algorithmic loss function.  
While, for differentiable algorithms, the neural network's output dimensionality or algorithm's input dimensionality $m$ is typically moderately small to make the inversion of the Hessian or empirical Fisher cheap, when the output dimensionality $m$ becomes very large such that inversion of the empirical Fisher becomes expensive, we refer to the Woodbury matrix identity~\cite{woodbury1950inverting}, which allows simplifying the computation via its low-rank decomposition. 
A corresponding deviation is included in SM~\ref{sm:woodbury}. 
Additionally, solver-based inversion implementations can be used to make the inversion more efficient.

\section{Conclusion}

In this work, we focused on weakly-supervised learning problems that require integration of differentiable algorithmic procedures in the loss function.
This leads to non-convex loss functions that exhibit vanishing and exploding gradients, making them hard to optimize.
We proposed a novel approach for improving performance of algorithmic losses building upon the curvature information of the loss.
For this, we split the optimization procedure into two steps: optimizing on the loss itself using Newton's method to mitigate vanishing and exploding gradients, and then optimizing the neural network with gradient descent. 
We simplified this procedure via a transformation of an original loss function into a Newton loss, which comes in two flavors:
a Hessian variant for cases where the Hessian is available and an empirical Fisher variant as an alternative. 
We evaluated Newton Losses on a set of algorithmic supervision settings, demonstrating that the method can drastically improve performance for weakly-performing differentiable algorithms.
We hope that the community adapts Newton Losses for learning with differentiable algorithms and see great potential for combining it with future differentiable algorithms in unexplored territories of the space of differentiable relaxations, algorithms, operators, and simulators.

\begin{ack}
This work was in part supported by the IBM-MIT Watson AI Lab, the DFG in the Cluster of Excellence EXC 2117 ``Centre for the Advanced Study of Collective Behaviour'' (Project-ID 390829875), the Land Salzburg within the WISS 2025 project IDA-Lab (20102-F1901166-KZP and 20204-WISS/225/197-2019), the U.S. DOE Contract No. DE-AC02-76SF00515, the ARO~(W911NF-21-1-0125), the ONR~(N00014-23-1-2159), and the CZ~Biohub.
\end{ack}

\printbibliography

\clearpage

\appendix
\doparttoc
\faketableofcontents
\part{Appendix} %
\parttoc

\section{Characterization of Meaningful Settings for Newton Losses}
\label{sm:meaningful-settings}

For practical purposes, to decide whether applying Newton Losses is expected to improve results, we recommend that a loss $\ell$ fulfills the following 3 minimal criteria: 
\begin{itemize}
    \item (i) non-convex,
    \item (ii) smoothly differentiable,
    \item (iii) cannot be solved by a single GD step.
\end{itemize}

Regarding (ii), the stochastic smoothing formulation enables any non-differentiable function (such as Shortest-paths as considered in this work) to become smoothly differentiable.

Regarding (iii), we note that, e.g., for the MSE loss, the optimum of the loss (when optimizing loss inputs) can be found using a single step of GD (see last paragraph of Section~\ref{sec:last-paragraph-of-method-for-mse-trivial}).
For the cross-entropy classification loss, a single GD step leads to the correct class.

\section{Algorithmic Supervision Losses}
\label{sec:algorithmic-losses}

In this section, we extend the discussion of SoftSort, DiffSort, AlgoVision, and stochastic smoothing. %

\subsection{SoftSort and NeuralSort}
\label{sec:softsort}
SoftSort~\cite{prillo2020softsort} and NeuralSort~\cite{grover2019neuralsort} are prominent yet simple examples of a differentiable algorithm.
In the case of ranking supervision, they obtain an array or vector of scalars and return a row-stochastic matrix called the differentiable permutation matrix $P$, which is a relaxation of the argsort operator.
Note that, in this case, a set of $k$ inputs yields a scalar for each image and thereby $y\in\mathbb{R}^k$.
As a ground truth label, a ground truth permutation matrix $Q$ is given and the loss between $P$ and $Q$ is the binary cross entropy loss $\,\ell_{\mathrm{SS}}(y) = \mathrm{BCE}\,(P(y), Q)\,$.
Minimizing the loss enforces the order of predictions $y$ to correspond to the true order, which is the training objective.
SoftSort is defined as
\begin{align}
    P(y) &= \operatorname{softmax} \left(- \left|y^\top \ominus \operatorname{sort} (y)\right| / \tau\right) = \operatorname{softmax} \left(- \left|y^\top \ominus S y\right| / \tau\right)
\end{align} 
where $\tau$ is a temperature parameter, ``$\operatorname{sort}$'' sorts the entries of a vector in non-ascending order, $\ominus$ is the element-wise broadcasting subtraction, $|\cdot|$ is the element-wise absolute value, and ``$\operatorname{softmax}$'' is the row-wise softmax operator. %
NeuralSort is defined similarly and omitted for the sake of brevity.
In the limit of $\tau\to0$, SoftSort and NeuralSort converge to the exact ranking permutation matrix~\cite{grover2019neuralsort, prillo2020softsort}. 
A respective Newton loss can be implemented using automatic differentiation according to Definition~\ref{def:hessian-newton-loss} or via the empirical Fisher matrix using Definition~\ref{def:fisher-newton-loss}.

\subsection{DiffSort}
Differentiable sorting networks (DSN)~\cite{petersen2021diffsort, petersen2022monotonic} offer a strong alternative to SoftSort and NeuralSort. 
They are based on sorting networks, a classic family of sorting algorithms that operate by conditionally swapping elements~\cite{knuth1998sorting}.
As the locations of the conditional swaps are pre-defined, they are suitable for hardware implementations, which also makes them especially suited for continuous relaxation.
By perturbing a conditional swap with a distribution and solving for the expectation under this perturbation in closed-form, we can differentiably sort a set of values and obtain a differentiable doubly-stochastic permutation matrix $P$, which can be used via the BCE loss as in Section~\ref{sec:softsort}.
We can obtain the respective Newton loss either via the Hessian computed via automatic differentiation or via the Fisher matrix. 

\subsection{AlgoVision}
AlgoVision~\cite{petersen2021learning} is a framework for continuously relaxing arbitrary simple algorithms by perturbing all accessed variables with logistic distributions.
The method approximates the expectation value of the output of the algorithm in closed-form and does not require sampling. 
For shortest-path supervision, we use a relaxation of the Bellman-Ford algorithm~\cite{bellman1958routing, ford1956network} and compare the predicted shortest path with the ground truth shortest path via an MSE loss.
The input to the shortest path algorithm is a cost embedding matrix predicted by a neural network.

\subsection{Stochastic Smoothing}
\label{sec:stoch-smooth-sm}
Another differentiation method is stochastic smoothing~\cite{abernethy2016perturbation}.
This method regularizes a non-differentiable and discontinuous loss function $\ell(y)$ by randomly perturbing its input with random noise $\epsilon$ (i.e., $\ell(y+\epsilon)$).
The loss function is then approximated as $\ell(y)\approx \ell_\epsilon(y) = \mathbb{E}_\epsilon[\ell(y+\epsilon)]$. 
While $\ell$ may be non-differentiable, its smoothed stochastic counterpart $\ell_\epsilon$ is differentiable and the corresponding gradient and Hessian can be estimated via the following result.
\begin{lemma}[Exponential Family Smoothing, adapted from Lemma 1.5 in~Abernethy~\textit{et al.}~\cite{abernethy2016perturbation}]
Given a distribution over $\mathbb{R}^m$ with a probability density function $\mu$ of the form $\mu(\epsilon)=\exp(-\nu(\epsilon))$ for any twice-differentiable~$\nu$, then
\begin{eqnarray}
    \nabla_{\!y} l_\epsilon(y)
    & = & \nabla_{\!y}^{\phantom{2}} \mathbb{E}_\epsilon\left[\ell(y+\epsilon) \right]
    ~~=~~ \mathbb{E}_\epsilon\big[\ell(y+\epsilon)\, \nabla_\epsilon\nu(\epsilon) \big], \label{eq:stoch-smooth-grad} \\
    \nabla_{\!y}^2 l_\epsilon(y)
    & = & \nabla_{\!y}^2 \mathbb{E}_\epsilon\left[\ell(y+\epsilon) \right]
    ~~=~~ \mathbb{E}_\epsilon\!\left[\ell(y+\epsilon)\, \Big(\nabla_{\!\epsilon}\nu(\epsilon) \nabla_{\!\epsilon}\nu(\epsilon)^{\!\top}\! - \nabla_{\!\epsilon}^2\nu(\epsilon)\Big) \right]\!. \label{eq:stoch-smooth-hess}
\end{eqnarray}
\end{lemma}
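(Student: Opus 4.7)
The plan is to prove both identities by the standard push-the-derivative-onto-the-density trick, using the exponential family form $\mu(\epsilon)=e^{-\nu(\epsilon)}$ to convert derivatives of $\mu$ into multiplications by $\nabla\nu$ and $\nabla^2\nu$. I would work with the integral representation $\ell_\epsilon(y)=\int \ell(y+\epsilon)\,\mu(\epsilon)\,\mathrm{d}\epsilon$ and begin with the change of variables $u=y+\epsilon$, giving $\ell_\epsilon(y)=\int \ell(u)\,\mu(u-y)\,\mathrm{d}u$. This shifts the entire $y$-dependence onto the density and lets me differentiate through the integral without touching $\ell$, which is crucial since in the targeted application $\ell$ is non-smooth (e.g., a shortest-path cost).

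For the gradient, I would interchange $\nabla_y$ with $\int$ (justified by standard Leibniz/dominated-convergence hypotheses implicit in "twice-differentiable $\nu$" plus integrability of $\ell\,\mu$ and $\ell\,\|\nabla\nu\|\,\mu$), use the chain rule $\nabla_y \mu(u-y) = -\nabla_\epsilon \mu(\epsilon)\big|_{\epsilon=u-y}$, and then substitute $\nabla_\epsilon \mu(\epsilon) = -\nabla_\epsilon \nu(\epsilon)\,\mu(\epsilon)$, which follows immediately from $\mu=e^{-\nu}$. Reverting the change of variables yields
\begin{equation*}
\nabla_y \ell_\epsilon(y) = \int \ell(y+\epsilon)\,\nabla_\epsilon \nu(\epsilon)\,\mu(\epsilon)\,\mathrm{d}\epsilon = \mathbb{E}_\epsilon\!\left[\ell(y+\epsilon)\,\nabla_\epsilon\nu(\epsilon)\right],
\end{equation*}
which is \eqref{eq:stoch-smooth-grad}.

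For the Hessian, I would repeat the argument on the integrand obtained in the previous step. After the same change of variables $u=y+\epsilon$, the gradient reads $\nabla_y \ell_\epsilon(y)=\int \ell(u)\,\nabla_\epsilon\nu(u-y)\,\mu(u-y)\,\mathrm{d}u$. Differentiating once more in $y$, using again $\nabla_y[\,\cdot\,(u-y)] = -\nabla_\epsilon[\,\cdot\,](\epsilon)\big|_{\epsilon=u-y}$, the key calculation is the product-rule identity
\begin{equation*}
\nabla_\epsilon\!\left[\nabla_\epsilon\nu(\epsilon)\,\mu(\epsilon)\right] = \nabla_\epsilon^2\nu(\epsilon)\,\mu(\epsilon) - \nabla_\epsilon\nu(\epsilon)\,\nabla_\epsilon\nu(\epsilon)^{\!\top}\mu(\epsilon),
\end{equation*}
where the second term comes from $\nabla_\epsilon\mu=-\nabla_\epsilon\nu\cdot\mu$. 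Combining the minus signs and reverting the change of variables gives
\begin{equation*}
\nabla_y^2 \ell_\epsilon(y) = \mathbb{E}_\epsilon\!\left[\ell(y+\epsilon)\,\Bigl(\nabla_\epsilon\nu(\epsilon)\,\nabla_\epsilon\nu(\epsilon)^{\!\top} - \nabla_\epsilon^2\nu(\epsilon)\Bigr)\right],
\end{equation*}
which is \eqref{eq:stoch-smooth-hess}.

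The main obstacle I anticipate is not the algebra, which is two applications of the chain rule and one product rule, but the regularity needed to justify pulling $\nabla_y$ and $\nabla_y^2$ inside the integral when $\ell$ may be non-smooth or unbounded. I would address this by invoking the dominated convergence theorem under the natural moment assumptions that $\mathbb{E}_\epsilon|\ell(y+\epsilon)|$, $\mathbb{E}_\epsilon[|\ell(y+\epsilon)|\,\|\nabla\nu(\epsilon)\|]$, and $\mathbb{E}_\epsilon[|\ell(y+\epsilon)|(\|\nabla\nu(\epsilon)\|^2 + \|\nabla^2\nu(\epsilon)\|)]$ are locally finite in $y$. These are mild for exponential-family tails (e.g., Gaussian $\nu$ gives polynomial factors) and for the bounded algorithmic losses considered in the paper, so the interchange is legitimate and the result follows.
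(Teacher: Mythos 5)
Your proof is correct: the change of variables $u=y+\epsilon$, differentiation under the integral, the identity $\nabla_\epsilon\mu=-\nabla_\epsilon\nu\,\mu$, and the product rule for the second derivative give exactly \eqref{eq:stoch-smooth-grad} and \eqref{eq:stoch-smooth-hess}, and your dominated-convergence caveat is the right regularity condition to flag. The paper itself does not reprove this lemma but cites it as adapted from Abernethy~\textit{et al.}~\cite{abernethy2016perturbation}, whose argument is the same standard score-function computation you carried out, so your proposal matches the intended derivation.
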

A \textit{variance-reduced form} of \eqref{eq:stoch-smooth-grad} and \eqref{eq:stoch-smooth-hess} is
\begin{eqnarray}
    \nabla_y \mathbb{E}_\epsilon\left[\ell(y+\epsilon) \right] 
    & = & \mathbb{E}_\epsilon\big[(\ell(y+\epsilon) - \ell(y))\, \nabla_\epsilon\nu(\epsilon) \big],  \\
    \nabla_y^2 \mathbb{E}_\epsilon\left[\ell(y+\epsilon) \right] 
    & = & \mathbb{E}_\epsilon\!\left[(\ell(y+\epsilon) - \ell(y))\, \Big(\nabla_\epsilon\nu(\epsilon) \nabla_\epsilon\nu(\epsilon)^{\!\top}\! - \nabla_\epsilon^2\nu(\epsilon)\Big) \right]\!. 
\end{eqnarray}

In this work, we use this to estimate the gradient of the shortest path algorithm.
By including the second derivative, we extend the perturbed optimizer losses to Newton losses. 
This also lends itself to full second-order optimization.

\subsubsection{SS of Algorithm}
\label{sm:ss-of-algo}

SS of algorithm is an extension of this formulation, where stochastic smoothing is used to compute the Jacobian of the smoothed algorithm, e.g., $f:\mathbb{R}^{144}\to\mathbb{R}^{144}$ and the loss is, e.g., $\ell(y) = \mathrm{MSE}(f(y), \mathrm{label})$. Here, we can backpropagate through $\mathrm{MSE}$ and can apply stochastic smoothing to $f$ only. 
(The idea being that for many samples, it is better to estimate the Jacobian rather than the gradient of smoothing the entire loss.) While computing the Jacobian of $f$ is simple with stochastic smoothing, the Hessian here would be of size $144 \times 144 \times 144 \times 144$, making it infeasible to estimate this Hessian via sampling.

\subsection{Perturbed Optimizers with Fenchel-Young Losses}

Berthet~\textit{et al.}~\cite{berthet2020learning} build on stochastic smoothing and Fenchel-Young losses~\cite{blondel2020learning} to propose perturbed optimizers with Fenchel-Young losses.
For this, they use algorithms, like Dijkstra, to solve optimization problems of the type $\max_{w\in\mathcal{C}}\langle y, w \rangle$, where $\mathcal{C}$ denotes the feasible set, e.g., the set of valid paths.   
Berthet~\textit{et al.}~\cite{berthet2020learning} identify the argmax to be the differential of max, which allows a simplification of stochastic smoothing.
By identifying similarities to Fenchel-Young losses, they find that the gradient of their loss is 
\begin{equation}
    \nabla_y \ell(y) = \mathbb{E}_\epsilon\left[\arg\max_{w\in\mathcal{C}}\langle y + \epsilon, w \rangle\right] - w^\star
\end{equation}
where $w^\star$ is the ground truth solution of the optimization problem (e.g., shortest path). 
This formulation allows optimizing the model without the need for computing the actual value of the loss function.
Berthet~\textit{et al.}~\cite{berthet2020learning} find that the number of samples---surprisingly---only has a small impact on performance, such that $3$ samples were sufficient in many experiments, and in some cases even a single sample was sufficient. 
In this work, we confirm this behavior and also compare it to plain stochastic smoothing. 
We find that for perturbed optimizers, the number of samples barely impacts performance, while for stochastic smoothing more samples always improve performance.
If only few samples can be afforded (like $10$ or less), perturbed optimizers are better as they are more sample efficient; however, when more samples are available, stochastic smoothing is superior as it can utilize more samples better.

\section{Hyperparameters and Training Details}
\label{sm:training-details}

\paragraph{Sorting and ranking.} 
100,000 training steps with Adam and learning rate 0.001. Same convolutional network as in all prior works on the benchmark: Two convolutional layers with a kernel size of 5x5, 32 and 64 channels respectively, each followed by a ReLU and MaxPool layer; after flattening, this is followed by a fully connected layer with a size of 64, a ReLU layer, and a fully connected output layer mapping to a scalar.

\begin{itemize}
    \item NeuralSort
    \begin{itemize}
        \item Temperature $\tau=1.0$ [Best for baseline from grid 0.001, 0.01, 0.1, 1, 10]
    \end{itemize}
    \item SoftSort
    \begin{itemize}
        \item Temperature $\tau=0.1$ [Best for baseline from grid 0.001, 0.01, 0.1, 1, 10]
    \end{itemize}
    \item Logistic DSN
    \begin{itemize}
        \item Type: \verb|odd_even|
        \item Inverse temperature
        \begin{itemize}
            \item For $n=\phantom{0}5$: $\beta=10$ [Best for baseline from grid 10, 15, 20]
            \item For $n=10$: $\beta=10$ [Best for baseline from grid 10, 20, 40]
        \end{itemize}
    \end{itemize}
    \item Cauchy DSN
    \begin{itemize}
        \item Type: \verb|odd_even|
        \item Inverse temperature
        \begin{itemize}
            \item For $n=\phantom{0}5$: $\beta=\phantom{0}10$
                  [Best for baseline from grid 10, 100]
            \item For $n=10$: $\beta=100$
                  [Best for baseline from grid 10, 100]
        \end{itemize}
    \end{itemize}
\end{itemize}

\paragraph{Shortest-path.}
Model, Optimizer, LR schedule, and epochs same as in prior work: First block of ResNet18, Adam optimizer, training duration of 50 epochs, and a learning rate decay by a factor of 10 after 30 and 40 epochs each.

\begin{itemize}
\item AlgoVision:
    \begin{itemize}
    \item $\beta=10$
    \item \verb|n_iter=18| \\
          (number iterations in for loop / max num iteration in algovision while loop)
    \item \verb|t_conorm='probabilistic'|
    \item Initial learning rate, for each (as it varies between for/while loop and L1/L2 loss), \\
          best among [1, 0.33, 0.1, 0.033, 0.01, 0.0033, 0.001].
    \end{itemize}
\item SS of loss / algorithm:
    \begin{itemize}
    \item Distribution: Gaussian with $\sigma=0.1$ \\
          {[best $\sigma$ on factor 10 exponential grid for baseline]}
    \item Initial LR: $0.001$
    \end{itemize}
\item PO / FY loss:
    \begin{itemize}
    \item Distribution: Gaussian with $\sigma=0.1$ \\
         {[best $\sigma$ on factor 10 exponential grid for baseline]}
    \item Initial LR: $0.01$
    \end{itemize}
\end{itemize}

\subsection[Hyperparameter lambda]{Hyperparameter $\lambda$}
For the experiments in the tables, select $\lambda$ based one seed from the grid $\lambda\in[0.001, 0.01, 0.1, 1, 10, 100, 100, 1000, 3000]$.
For the experiments in Tables~\ref{tab:shortest-path-algovision} and~\ref{tab:shortest-path-stochastic-sm}, we present the values in the Tables~\ref{tab:mnist-softsort-neuralsort-diffsort-sm} and~\ref{tab:shortest-path-stochastic-sm}, respectively.
For the experiments in Table~\ref{tab:shortest-path-algovision}, we use a Tikhonov regularization strength of $\lambda=1000$ for the $L_1$ variants and $\lambda=3000$ for the $L_2^2$ variants.

\begin{table}[h]
    \centering
    \vspace{-1em}
    \caption{
    {Tikhonov regularization strengths $\lambda$ for the experiment in Table~\ref{tab:mnist-softsort-neuralsort-diffsort}.} 
    \label{tab:mnist-softsort-neuralsort-diffsort-sm}
    }
    \vspace{0.5em}
    \resizebox{\linewidth}{!}%
    {\footnotesize
    \setlength{\tabcolsep}{2.5pt}
    \begin{tabular}{lcccccccc}
    \toprule
    & \multicolumn{4}{c}{$n=5$} & \multicolumn{4}{c}{$n=10$}  \\
    \cmidrule(r){2-5}\cmidrule(r){6-9}
    & NeuralSort & SoftSort & Logistic\,DSN & Cauchy\;DSN & NeuralSort & SoftSort & Logistic\,DSN & Cauchy\;DSN   \\
\midrule
NL (Hessian)    & $\lambda=0.01$            & $\lambda=10$              & $\lambda=0.1$             & $\lambda=0.1$             & $\lambda=0.01$            & $\lambda=1$              & $\lambda=0.1$             & $\lambda=0.1$      \\
NL (Fisher)     & $\lambda=0.1$             & $\lambda=10$              & $\lambda=0.1$             & $\lambda=0.1$             & $\lambda=100$             & $\lambda=100$            & $\lambda=0.1$             & $\lambda=0.1$                \\
\bottomrule
    \end{tabular}
    }
    \vspace{.5em}
    \centering
    \caption{
        {Tikhonov regularization strengths $\lambda$ for the experiment in Table~\ref{tab:shortest-path-stochastic}.} 
    \label{tab:shortest-path-stochastic-sm}
    }
    \vspace{0.5em}
    \resizebox{\linewidth}{!}
    {\footnotesize
    \setlength{\tabcolsep}{3pt}
    \begin{tabular}{l|ccc|ccc|ccc}
    \toprule
Loss            & \multicolumn{3}{c}{SS of loss} & \multicolumn{3}{c}{SS of algorithm} & \multicolumn{3}{c}{PO~w/ FY loss} \\
\# Samples      & 3 & 10 & 30 & 3 & 10 & 30 & 3 & 10 & 30 \\
\midrule
NL (Hessian)       & $\lambda=1000$ & $\lambda=1000$ & $\lambda=1000$  &---&---&---&  $\lambda=1000$ & $\lambda=1000$ & $\lambda=1000$ \\
NL (Fisher)        & $\lambda=0.1$ & $\lambda=0.1$ & $\lambda=0.1$ & $\lambda=1000$ & $\lambda=1000$ & $\lambda=1000$ & $\lambda=1000$ & $\lambda=1000$ & $\lambda=1000$ \\
\bottomrule
    \end{tabular}
    }
\end{table}

\subsection{List of Assets}
\label{sm:assets}
\begin{itemize}
    \item Multi-digit MNIST~\cite{grover2019neuralsort}, which builds on MNIST~\cite{lecun2010mnist} ~~[MIT License / CC License]
    \item Warcraft shortest-path data set~\cite{vlastelica2019differentiation} ~~[MIT License]
    \item PyTorch~\cite{paszke2019pytorch} ~~[BSD 3-Clause License]
\end{itemize}

\section{Runtimes}
\label{sec:runtimes}
\vspace{-.5em}

In this supplementary material, we provide and discuss runtimes for the experiments.
All times are of full training on a single A6000 GPU.

In the differentiable sorting and ranking experiment, as shown in Table~\ref{tab:mnist-softsort-neuralsort-diffsort-runtime}, we observe that the runtime from regular training compared to the Newton loss with the Fisher is only marginally increased. 
This is because computing the Fisher and inverting it is very inexpensive.
We observe that the Newton loss with the Hessian, however, is more expensive: due to the implementation of the differentiable sorting and ranking operators, we compute the Hessian by differentiating each element of the gradient, which makes this process fairly expensive. 
An improved implementation could make this process much faster. 
Nevertheless, there is always some overhead to computing the Hessian compared to the Fisher.

\begin{table}[h]
    \centering
\vspace{-1em}
    \caption{
    Runtimes  [h:mm] for the differentiable sorting results corresponding to Table~\ref{tab:mnist-softsort-neuralsort-diffsort}. 
    }
    \vspace{0.25em}
    {\footnotesize
    \setlength{\tabcolsep}{2.5pt}
    \begin{tabular}{lcccccc}
    \toprule
    & \multicolumn{3}{c}{$n=5$} & \multicolumn{3}{c}{$n=10$}  \\
    \cmidrule(r){2-4}\cmidrule(r){5-7}
    & DSN & NeuralSort & SoftSort & DSN & NeuralSort & SoftSort   \\
\midrule
Baseline        & 1:10 & 1:02 & 1:01 & 1:43 & 1:27 & 1:24     \\
NL (Hessian)    & 2:24 & 1:07 & 1:10 & 6:17 & 1:42 & 1:40     \\
NL (Fisher)     & 1:11 & 1:03 & 1:02 & 1:44 & 1:27 & 1:25     \\
\bottomrule
    \end{tabular}
    }
    \label{tab:mnist-softsort-neuralsort-diffsort-runtime}
\vspace{-.5em}
\end{table}

In Table~\ref{tab:shortest-path-algovision-runtime}, we show the runtimes for the shortest-path experiment with AlgoVision. Here, we observe that the runtime overhead is very small.

\begin{table}[h]
\vspace{-1em}
    \centering
    \caption{
        Runtimes  [h:mm] for the shortest-path results corresponding to Table~\ref{tab:shortest-path-algovision}. 
    }
    \vspace{.25em}
    {\footnotesize
    \begin{tabular}{l|cc|cc}
    \toprule
Algorithm Loop  &     \multicolumn{2}{c}{\texttt{For}}  & \multicolumn{2}{c}{\texttt{While}} \\
Loss            &      $L_1$ & $L_2^2$                  &  $L_1$ & $L_2^2$   \\
    \midrule
Baseline         &     0:10 & 0:10 & 0:15 & 0:15 \\
NL (Fisher)      &     0:10 & 0:11 & 0:15 & 0:15 \\
\bottomrule
    \end{tabular}
    }
    \label{tab:shortest-path-algovision-runtime}
\vspace{-.5em}
\end{table}

In Table~\ref{tab:shortest-path-stochastic-runtime}, we show the runtimes for the shortest-path experiment with stochastic methods. Here, we observe that the runtime overhead is also very small. Here, the Hessian is also cheap to compute as it is not computed with automatic differentiation.

\begin{table}[h!]
    \centering
\vspace{-1em}
    \caption{
        Runtimes  [h:mm] for the shortest-path results corresponding to Table~\ref{tab:shortest-path-stochastic}.
    }
    \vspace{.25em}
    {\footnotesize
    \setlength{\tabcolsep}{3pt}
    \begin{tabular}{l|ccc|ccc|ccc}
    \toprule
Loss            & \multicolumn{3}{c}{SS of loss} & \multicolumn{3}{c}{SS of algorithm} & \multicolumn{3}{c}{PO~w/ FY loss} \\
\# Samples      & 3 & 10 & 30 & 3 & 10 & 30 & 3 & 10 & 30 \\
\midrule
Baseline         & 0:15 & 0:23 & 0:53 & 0:15 & 0:23 & 0:53 & 0:11 & 0:19 & 0:49   \\ 
NL (Hessian)     & 0:15 & 0:23 & 0:53 & $  -$ & $   -$ & $   -$ & 0:11 & 0:19 & 0:50   \\ 
NL (Fisher)      & 0:15 & 0:23 & 0:54 & 0:15 & 0:23 & 0:53 & 0:11 & 0:19 & 0:50   \\ 
\bottomrule
    \end{tabular}
    }
    \label{tab:shortest-path-stochastic-runtime}
\end{table}

\begin{figure}[h!]
    \centering
    \vspace{-1.em}
    \includegraphics[width=.6\linewidth]{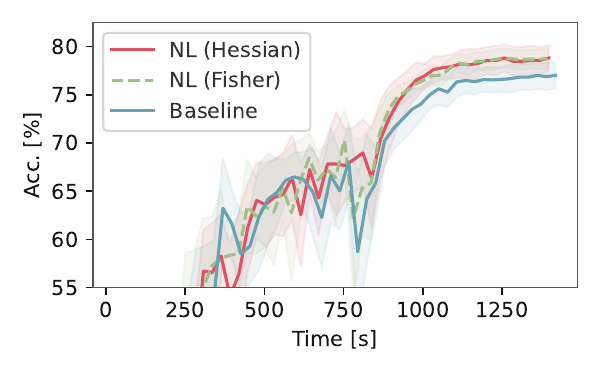}
    \vspace{-.75em}
    \caption{
    Training time plot corresponding to Figure~\ref{fig:ss-training}: Test accuracy (perfect matches) plot for `SS of loss' with $10$ samples on the Warcraft shortest-path benchmark.
    }
    \vspace{-.5em}
\end{figure}

\section{Equivalences under the Split}

Using gradient descent step according to \eqref{eq:training:update} is equivalent to using two gradient steps of the alternating scheme \eqref{eq:two:step:scheme}, namely one step for \eqref{eq:training:update-z} and one step for \eqref{eq:training:update-theta}.
This has also been considered by \cite{frerix2018proximal} in a different context.

\begin{lemma}[Gradient Descent Step Equality between \eqref{eq:training:update} and \eqref{eq:training:update-z}+\eqref{eq:training:update-theta}] \label{lem:gradient:equiv-sm}
A gradient descent step according to \eqref{eq:training:update} with arbitrary step size $\eta$ coincides with two gradient descent steps, one according to \eqref{eq:training:update-z} and one according to \eqref{eq:training:update-theta}, where the optimization over $\theta$ has a step size of\/ $\eta$ and the optimization over $z$ has a unit step size. %
\end{lemma}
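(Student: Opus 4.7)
The statement is essentially a chain-rule identity, so the plan is to write out both update rules explicitly and verify that their right-hand sides coincide.

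First, I would expand the single-step update \eqref{eq:training:update} using the chain rule: writing $\yb = f(\xb;\theta)$ and letting $J_\theta f(\xb;\theta)$ denote the Jacobian of $f$ with respect to $\theta$, gradient descent with step size $\eta$ produces
\begin{equation*}
    \theta_t = \theta_{t-1} - \eta\, J_\theta f(\xb;\theta_{t-1})^{\!\top}\, \nabla_{\yb}\ell(\yb)\big|_{\yb = f(\xb;\theta_{t-1})}.
\end{equation*}

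Next, I would carry out the two-step scheme. The inner step \eqref{eq:training:update-z} is a gradient descent step on $\ell(\zb)$ starting from $\zb = f(\xb;\theta_{t-1})$ with unit step size, giving
\begin{equation*}
    \zb^\star_t = f(\xb;\theta_{t-1}) - \nabla_{\zb}\ell(\zb)\big|_{\zb = f(\xb;\theta_{t-1})}.
\end{equation*}
Then, differentiating the quadratic surrogate $\tfrac12\|\zb^\star_t - f(\xb;\theta)\|_2^2$ with respect to $\theta$ and evaluating at $\theta_{t-1}$ gives
\begin{equation*}
    \nabla_\theta \tfrac12\|\zb^\star_t - f(\xb;\theta)\|_2^2\big|_{\theta=\theta_{t-1}} = -J_\theta f(\xb;\theta_{t-1})^{\!\top}\bigl(\zb^\star_t - f(\xb;\theta_{t-1})\bigr).
\end{equation*}

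Finally, I would substitute the expression for $\zb^\star_t$ into the difference $\zb^\star_t - f(\xb;\theta_{t-1}) = -\nabla_{\zb}\ell(\zb)\big|_{\zb = f(\xb;\theta_{t-1})}$, which collapses the outer update \eqref{eq:training:update-theta} to
\begin{equation*}
    \theta_t = \theta_{t-1} - \eta\, J_\theta f(\xb;\theta_{t-1})^{\!\top}\, \nabla_{\zb}\ell(\zb)\big|_{\zb = f(\xb;\theta_{t-1})},
\end{equation*}
matching the one-step update exactly and establishing the equivalence.

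There is no real obstacle here: the argument is a direct chain-rule calculation. The only subtle point worth flagging explicitly is that the unit step size on $\zb$ is essential, since it is precisely what makes $\zb^\star_t - f(\xb;\theta_{t-1})$ equal to $-\nabla_{\zb}\ell(\zb)$ (without a multiplicative factor), allowing the chain rule to reassemble the composite gradient after the outer $\eta$ is applied. Any other inner step size would produce a rescaled composite gradient and the equivalence would fail.
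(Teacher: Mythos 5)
Your proposal is correct and follows essentially the same route as the paper's proof: compute the unit-step inner update $\zb^\star_t = f(\xb;\theta_{t-1}) - \nabla_{\zb}\ell$, differentiate the quadratic surrogate via the chain rule, and substitute to recover $\theta_t = \theta_{t-1} - \eta\,\nabla_\theta \ell(f(\xb;\theta_{t-1}))$. Your version is, if anything, slightly more careful notationally (explicit Jacobian transpose) and your closing remark about why the unit inner step size is essential matches the role it plays in the paper's calculation.
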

\begin{proof}
Let $\theta\in\Theta$ be the current parameter vector and let $\zb = f(\xb;\theta)$. Then the gradient descent steps according to \eqref{eq:training:update-z} and \eqref{eq:training:update-theta} with step sizes $1$ and $\eta>0$ are expressed as
\begin{eqnarray}
    \zb &\!\!\leftarrow\!\! & \zb - \nabla_{\!\zb}\,\ell(\zb) = f(\xb;\theta) - \nabla_{\!f} \,\ell(f(\xb;\theta)) \label{eq:gd:z} \\
    \theta &\!\!\leftarrow\!\! & \theta - \eta\, \nabla_{\!\theta} \, {\textstyle\frac{1}{2}}\|\zb - f(\xb; \theta)\|^2_2 \nonumber \\
           & \!\!=\!\! & \theta - \eta\, \frac{\partial\,f(\xb; \theta)}{\partial\,\theta} \cdot (f(\xb; \theta) - \zb) \,. \label{eq:gd:theta}
\end{eqnarray}
Combining \eqref{eq:gd:z} and \eqref{eq:gd:theta} leads to
\begin{align}
    \theta &\leftarrow \theta - \eta\, \frac{\partial\,f(\xb; \theta)}{\partial\,\theta} {\cdot} (f(\xb; \theta) {-} f(\xb;\theta) {+} \nabla_{\!f} \,\ell(f(\xb;\theta))) \nonumber \\
    & \rgtbox{{}\leftarrow{}}{$=\;$} \theta - \eta\, \nabla_{\!\theta}\,\ell(f(\xb;\theta)),
\end{align}
which is exactly a gradient descent step starting at $\theta\in\Theta$ with step size $\eta$.
\end{proof}

Moreover, we show that a corresponding equality also holds for a special case of the Newton step.

\begin{lemma}[Newton Step Equality between \eqref{eq:training:update} and \eqref{eq:training:update-z}+\eqref{eq:training:update-theta} for $m=1$] \label{lem:Newton:equiv}
In the case of $m=1$ (i.e., a one-dimensional output), a Newton step according to \eqref{eq:training:update} with arbitrary step size $\eta$ coincides with two Newton steps, one according to \eqref{eq:training:update-z} and one according to \eqref{eq:training:update-theta}, where the optimization over $\theta$ has a step size of\/ $\eta$ and the optimization over $z$ has a unit step size.
\end{lemma}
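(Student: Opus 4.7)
The plan is to expand both updates using the chain rule and show that the inverse-Hessian-times-gradient expressions agree, exploiting the fact that when $m=1$ the second derivative $\ell''(y)$ is a scalar and can therefore be factored out. More precisely, I will invoke the well-known scaling invariance of Newton's method: if $\tilde L = c\,L$ for a nonzero scalar $c$, then $(\nabla^2 \tilde L)^{-1}\nabla \tilde L = (\nabla^2 L)^{-1}\nabla L$. The key observation is that the two-step update in \eqref{eq:training:update-z}\,+\,\eqref{eq:training:update-theta} produces, at $\theta=\theta_{t-1}$, exactly the gradient and Hessian of $\ell\circ f$ rescaled by $1/\ell''(y)$.

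\textbf{Carrying out the steps.} First, I would write the full Newton update on $\ell(f(x;\theta))$ via the chain rule: with $y=f(x;\theta)$, $J=\nabla_\theta f$ (a column vector for $m=1$),
$$
g \;=\; \ell'(y)\,J, \qquad H \;=\; \ell''(y)\,J J^{\!\top} + \ell'(y)\,\nabla^2_\theta f,
$$
so that the Newton step is $\theta \leftarrow \theta - \eta\, H^{-1} g$. Second, I would perform the Newton step on $\ell(z)$ with unit step size, which since $m=1$ reduces to $z^\star = y - \ell'(y)/\ell''(y)$. Third, I would compute the gradient and Hessian of $\tfrac12(z^\star - f(x;\theta))^2$ at $\theta_{t-1}$ (where $f=y$ so that $f - z^\star = \ell'(y)/\ell''(y)$):
$$
\tilde g \;=\; \bigl(\ell'(y)/\ell''(y)\bigr)\,J, \qquad \tilde H \;=\; J J^{\!\top} + \bigl(\ell'(y)/\ell''(y)\bigr)\,\nabla^2_\theta f.
$$
Fourth, I would observe the scalar factorization $g = \ell''(y)\,\tilde g$ and $H = \ell''(y)\,\tilde H$, from which $H^{-1} g = \tilde H^{-1}\tilde g$ follows immediately, giving identical Newton updates at step size $\eta$.

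\textbf{Main obstacle and why $m=1$ is essential.} The arithmetic is routine; the real content is the scalar factorization in step four, which relies decisively on $\ell''(y)$ being a scalar. For $m>1$ (or for a vectorized loss $\ell:\mathbb{R}^{N\times m}\to\mathbb{R}$ whose input Hessian $\nabla^2_{\yb}\ell$ is a nontrivial matrix), the corresponding identity would require $\nabla^2_{\yb}\ell$ to commute with $J^{\!\top}J$ and the cross terms $\sum_i g_{y,i}\nabla^2_\theta f_i$, which is not generally true and breaks the equivalence. A minor subtlety is well-definedness: the argument implicitly assumes $\ell''(y)\neq 0$ and $H$ invertible, which is the same nondegeneracy needed to make the single-step Newton update in \eqref{eq:training:update} meaningful in the first place; the Tikhonov regularization introduced in \eqref{eq:first-with-tik} would handle this in practice, and an analogous argument with $\ell''(y)+\lambda$ in place of $\ell''(y)$ extends the equivalence to the regularized setting.
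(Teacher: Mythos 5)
Your proposal is correct and follows essentially the same route as the paper's proof: both apply the chain rule to identify the gradient and Hessian of the surrogate objective ${\textstyle\frac12}(\zb^\star - f(\xb;\theta))^2$ at $\theta_{t-1}$ and exploit that, for $m=1$, the scalar $\nabla^2_{\!f}\ell$ factors out of both, so the Newton directions coincide. Your explicit remark on why the argument breaks for $m>1$ is a nice complement but does not change the underlying argument.
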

\begin{proof}
Let $\theta\in\Theta$ be the current parameter vector and let $\zb = f(\xb;\theta)$. 
Then applying Newton steps according to \eqref{eq:training:update-z} and \eqref{eq:training:update-theta} leads to
\begin{align}
    \zb &\leftarrow \zb - (\nabla^2_{\!\zb}\ell(\zb))^{-1} \nabla_{\!\zb}\, \ell(\zb) \nonumber \\
    &\rgtbox{{}\leftarrow{}}{$=\;$} f(\xb;\theta) - (\nabla_{\!f}^2\ell(f(\xb;\theta)))^{-1} \nabla_f \ell(f(\xb;\theta)) \label{eq:newton:pf:1}\\
    \theta &\leftarrow \theta - \eta \left( \nabla^2_\theta \frac{1}{2}\|\zb - f(\xb; \theta)\|^2_2 \right)^{\!\!-1} 
    \nabla_\theta \frac{1}{2}\|\zb - f(\xb; \theta)\|^2_2\\ 
    &\rgtbox{{}\leftarrow{}}{$=\;$} \theta - \eta\left( \frac{\partial}{\partial\theta}\left[   \frac{\partial\,f(\xb; \theta)}{\partial\,\theta} \cdot (f(\xb; \theta) - \zb)    \right] \right)^{\!\!-1}
    \frac{\partial\,f(\xb; \theta)}{\partial\,\theta} \cdot (f(\xb; \theta) - \zb) \label{eq:newton:pf:2}\\
    &\rgtbox{{}\leftarrow{}}{$=\;$} \theta - \eta \left(\! \frac{\partial}{\partial\theta}\!\!\left[\!\frac{\partial\,f(\xb; \theta)}{\partial\,\theta} \!\right]\! (f(\xb; \theta) {-} \zb) {+} \left(\! \frac{\partial\,f(\xb; \theta)}{\partial\,\theta} \!\right)^{\!\!2} \right)^{\!\!\!-1} 
    \frac{\partial\,f(\xb; \theta)}{\partial\,\theta} \cdot (f(\xb; \theta) - \zb) \nonumber
\end{align}
Inserting \eqref{eq:newton:pf:1}, we can rephrase the update above as
\begin{equation}\label{eq:newton:pf:3}
\begin{aligned}
    \theta &\leftarrow \theta - \eta \!\left(\! \frac{\partial}{\partial\theta}\!\left[\!\frac{\partial\,f(\xb; \theta)}{\partial\,\theta} \!\right]\! (\nabla_{\!f}^2\ell(f(\xb;\theta)))^{-1} \nabla_{\!f} \ell(f(\xb;\theta))\right. 
     + \left( \frac{\partial\,f(\xb; \theta)}{\partial\,\theta} \right)^{\!\!2} \bigg)^{\!\!-1} \\ &\qquad \qquad \cdot \frac{\partial\,f(\xb; \theta)}{\partial\,\theta} \cdot (\nabla_{\!f}^2\ell(f(\xb;\theta)))^{-1} \nabla_{\!f} \ell(f(\xb;\theta)) 
\end{aligned}
\end{equation}
By applying the chain rule twice, we further obtain
\begin{align*}
    \nabla^2_{\theta} \ell(f(\xb;\theta)) 
    &= \frac{\partial}{\partial\theta} \left[\frac{\partial\,f(\xb; \theta)}{\partial\,\theta} \nabla_{\!f}\ell(f(\xb;\theta))  \right] \\
    &=\frac{\partial}{\partial\theta} \left[\frac{\partial\,f(\xb; \theta)}{\partial\,\theta}   \right]\nabla_{\!f}\ell(f(\xb;\theta)) 
     + \frac{\partial\,f(\xb; \theta)}{\partial\,\theta} \frac{\partial}{\partial\theta} \nabla_{\!f}\ell(f(\xb;\theta))  \\
    &= \frac{\partial}{\partial\theta} \left[\frac{\partial\,f(\xb; \theta)}{\partial\,\theta}   \right]\nabla_{\!f}\ell(f(\xb;\theta)) 
      + \frac{\partial\,f(\xb; \theta)}{\partial\,\theta}  \nabla_{\!f} \frac{\partial}{\partial\theta} \ell(f(\xb;\theta))  \\
    &= \frac{\partial}{\partial\theta} \left[\frac{\partial\,f(\xb; \theta)}{\partial\,\theta}   \right]\nabla_{\!f}\ell(f(\xb;\theta)) 
      + \left(\!\frac{\partial\,f(\xb; \theta)}{\partial\,\theta}\!\right)^{\!2}   \!\nabla^2_{\!f}\ell(f(\xb;\theta)),
\end{align*}
which allows us to rewrite \eqref{eq:newton:pf:3} as
\begin{align*}
    \theta' &= \theta - \left( (\nabla_{\!f}^2 \ell(f(\xb;\theta)))^{-1} \nabla^2_{\!\theta} \ell(f(\xb;\theta)) \right)^{-1} 
     (\nabla_{\!f}^2\ell(f(\xb;\theta)))^{-1} \nabla_{\!\theta}\,\ell(f(\xb;\theta)) \\
    &=\theta - \phantom{\big(} (\nabla^2_{\!\theta} \ell(f(\xb;\theta)))^{-1} \nabla_{\!\theta}\,\ell(f(\xb;\theta)),
\end{align*}
which is exactly a single Newton step starting at $\theta\in\Theta$.
\end{proof}

\section{Woodbury Matrix Identity for High Dimensional Outputs}
\label{sm:woodbury}

For the empirical Fisher method, where $\zb^\star\in\mathbb{R}^{N\times m}$ is computed via 
$$
z_i^\star = \bar y_i - \Big( {\textstyle\frac1N\sum_{j=1}^N} \nabla_{\bar y_j} \ell({\bar \yb})\, \nabla_{\bar y_j} \ell({\bar \yb})^\top + \lambda \cdot \mathbf{I} \Big)^{\!-1} \nabla_{\bar y_i} \ell({\bar \yb})
$$
for all~~$i\in\{1,...,N\}$~~and~~$\bar \yb = f(\xb;\theta)$, we can simplify the computation via the Woodbury matrix identity~\cite{woodbury1950inverting}.
In particular, we can simplify the computation of the inverse to 
\begin{align}
    & \Big( {\textstyle\frac1N\sum_{j=1}^N} \nabla_{\bar y_j} \ell({\bar \yb})\, \nabla_{\bar y_j} \ell({\bar \yb})^\top + \lambda \cdot \mathbf{I} \Big)^{\!-1} \\
    =& ~\Big( {\textstyle\frac1N} ~\nabla_{\bar \yb} \ell({\bar \yb})^\top \, \nabla_{\bar \yb} \ell({\bar \yb}) ~+ \lambda \cdot \mathbf{I} \Big)^{\!-1} \\
    =& ~\frac{1}{\lambda} \cdot \mathbf{I}_m - \frac1{N\cdot \lambda^2} \cdot \nabla_{\bar \yb} \ell({\bar \yb})^\top \cdot \Big({\textstyle\frac1{N\cdot \lambda}} \nabla_{\bar \yb} \ell({\bar \yb}) \, \nabla_{\bar \yb} \ell({\bar \yb})^\top + \mathbf{I}_N\Big)^{\!-1}\cdot  \nabla_{\bar \yb} \ell({\bar \yb})\,.
\end{align}
This reduces the cost of matrix inversion from $\mathcal{O}(m^3)$ down to $\mathcal{O}(N^3)$.
We remark that this variant is only helpful for cases where the batch size $N$ is smaller than the output dimensionality $m$.

Further, we remark that in a case where the output dimensionality $m$ and the batch size $N$ are both very large, i.e., the cost of the inverse is typically still small in comparison to the computational cost of the neural network.
To illustrate this, the dimensionality of the last hidden space (i.e., before the last layer) $h_l$ is typically larger than the output dimensionality $m$.
Accordingly, the cost of only the last layer itself is $\mathcal{O}({N\cdot m \cdot h_l})$, which is typically larger than the cost of the inversion in Newton Losses.
In particular, assuming $h_l > m$,  $\mathcal{O}({N\cdot m \cdot h_l}) > \mathcal{O}(\min(m, N)^3) $.

We remark that the Woodbury inverted variation can also be used in \texttt{InjectFisher}. 

\section{InjectFisher for CVXPY Layers}

In this section, we provide an extension of the presented method, from algorithmic losses to optimizing the parameters in differentiable convex optimization layers~\cite{agrawal2019differentiable}.
For this, we utilize the \texttt{cvxpylayers} framework, and, as a proof of concept apply it to the training of a 3-layer network, where the first 2 layers are modelled via \texttt{cvxpylayers}: {\small\url{https://github.com/cvxgrp/cvxpylayers/blob/master/examples/torch/ReLU%20Layers.ipynb}}.

Because there are no Hessians or second-order derivatives available in \texttt{cvxpylayers}, we applied the empirical Fisher variant, via the \texttt{InjectFisher} function, to \texttt{cvxpylayers}.
We apply \texttt{InjectFisher} to (the vector of) all weights and biases of a given layer, and choose $\lambda=0.1$.

We ran it for 5 seeds, and the loss improves in each case (see Table~\ref{tab:cvxpy}).
Importantly, we note that these are paired results (only networks with the same initialization are comparable).

\begin{table}[h]
    \centering
    \caption{
        Loss of a \texttt{cvxpylayers} model. Results comparable only within each seed.
        \label{tab:cvxpy}
    }
    \vspace{.5em}
    \begin{tabular}{lcc}
        \toprule
        Seed & cvxpy & cvxpy + InjectFisher \\
        \midrule
        1 & 0.322 & \textbf{0.185} \\
        2 & 0.186 & \textbf{0.157} \\
        3 & 0.522 & \textbf{0.247} \\
        4 & 0.225 & \textbf{0.179} \\
        5 & 0.260 & \textbf{0.211} \\
        \bottomrule
    \end{tabular}
\end{table}

\clearpage
\section{Gradient Visualizations}

\begin{figure}[h!]
    \centering
    \begin{subfigure}[b]{0.48\textwidth}
        \centering
        \includegraphics[width=\textwidth]{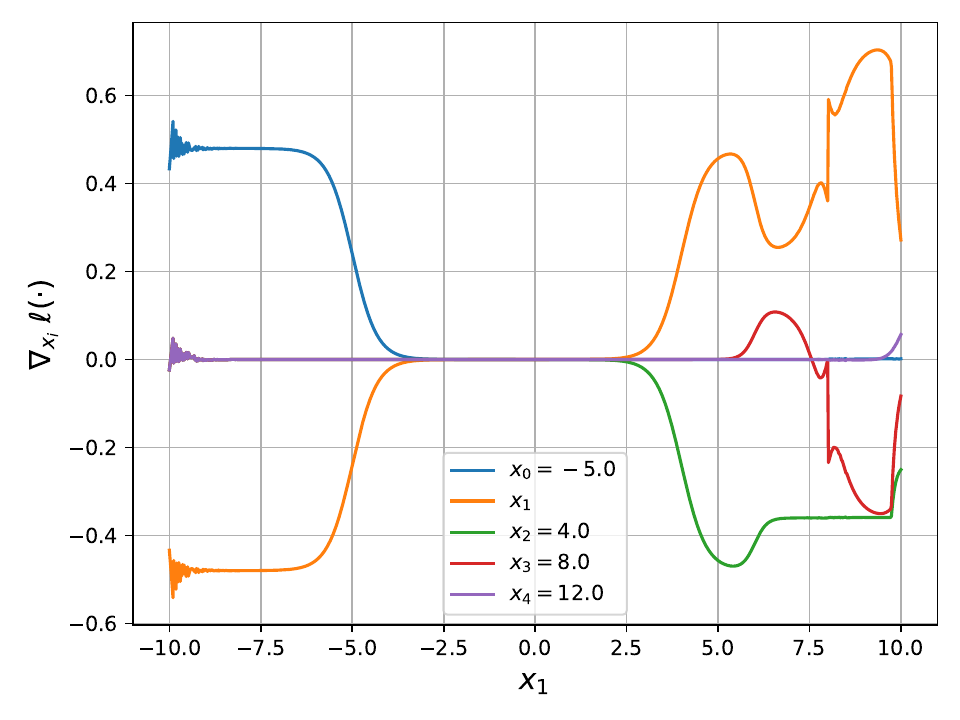}
        \caption{NeuralSort gradients in dependence of $x_1$}
        \label{fig:subfig1}
    \end{subfigure}
    \hfill
    \begin{subfigure}[b]{0.48\textwidth}
        \centering
        \includegraphics[width=\textwidth]{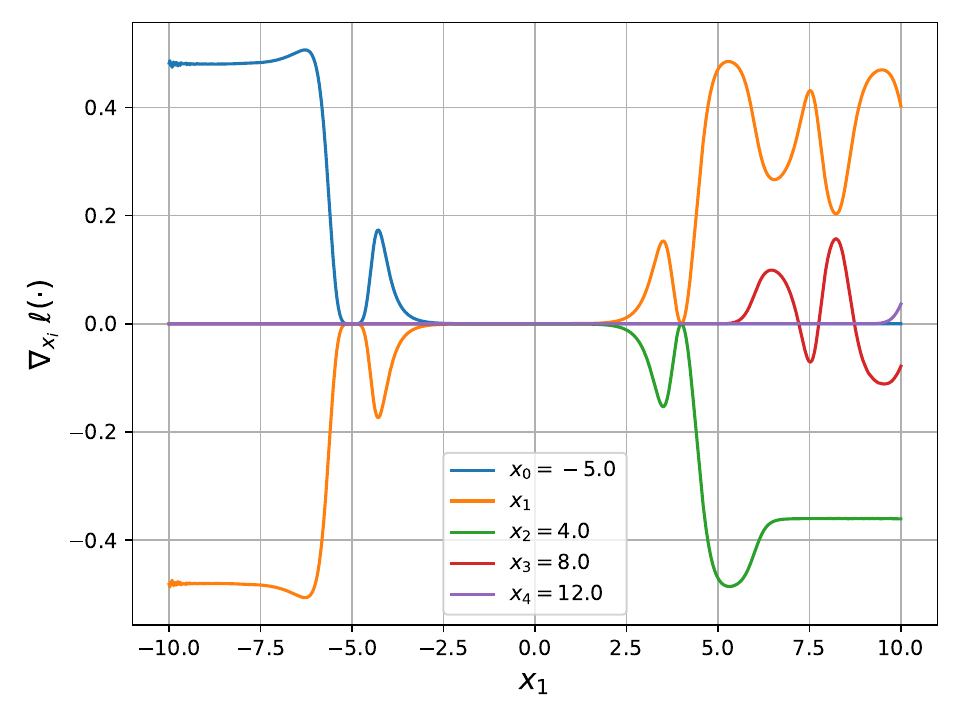}
        \caption{Logistic DSN gradients in dependence of $x_1$}
        \label{fig:subfig2}
    \end{subfigure}
    \vskip\baselineskip
    \begin{subfigure}[b]{0.48\textwidth}
        \centering
        \includegraphics[width=\textwidth]{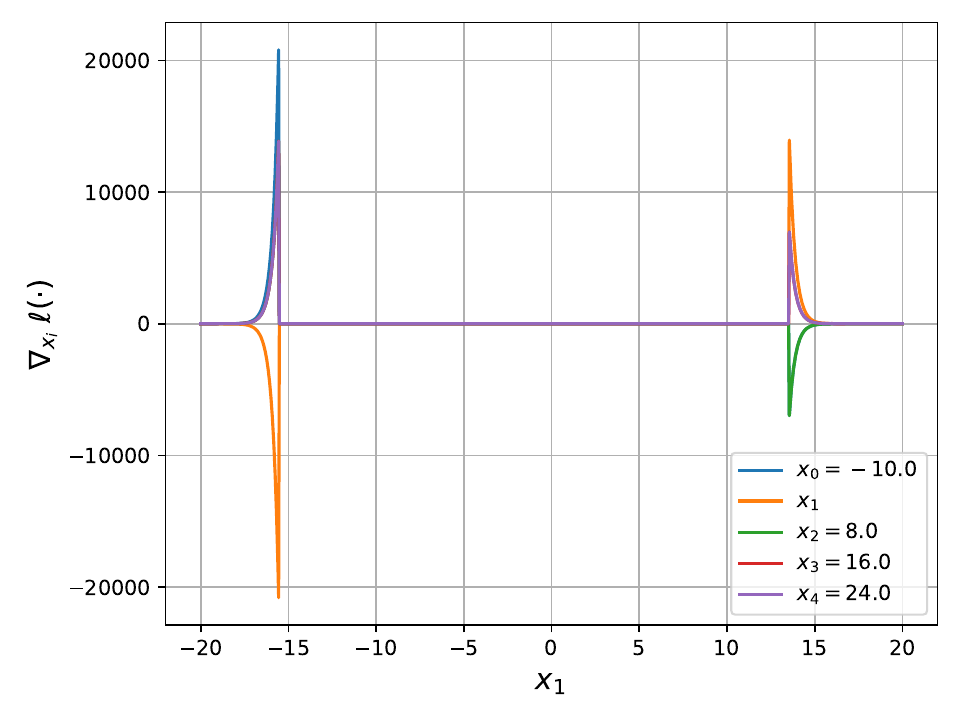}
        \caption{NeuralSort gradients (larger spread)}
        \label{fig:subfig3}
    \end{subfigure}
    \hfill
    \begin{subfigure}[b]{0.48\textwidth}
        \centering
        \includegraphics[width=\textwidth]{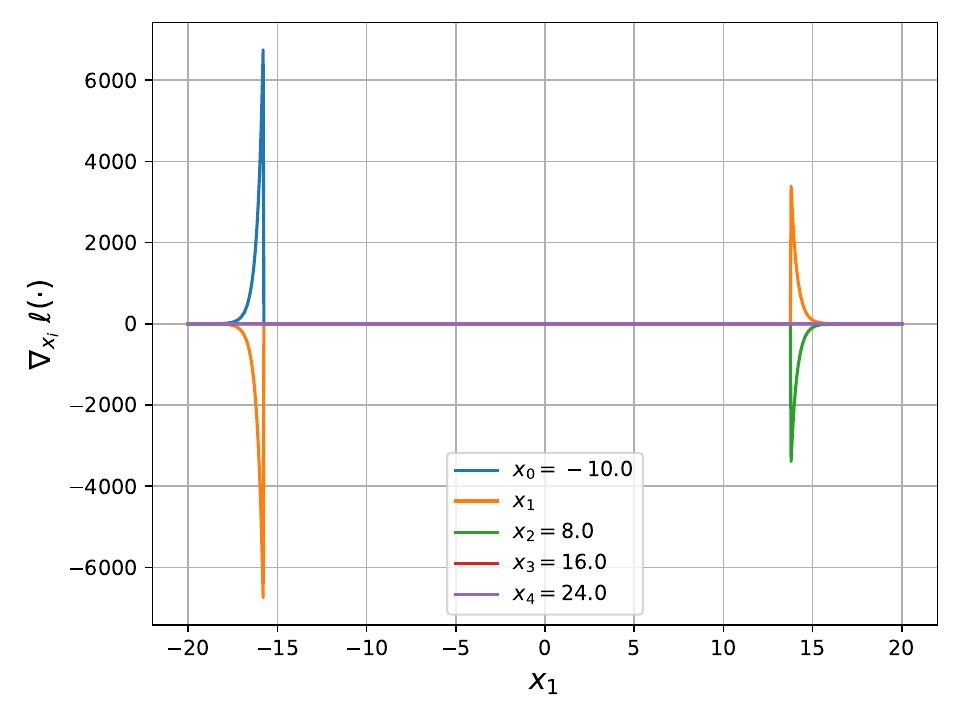}
        \caption{Logistic DSN gradients (larger spread)}
        \label{fig:subfig4}
    \end{subfigure}
    \vskip\baselineskip
    \begin{subfigure}[b]{0.48\textwidth}
        \centering
        \includegraphics[width=\textwidth]{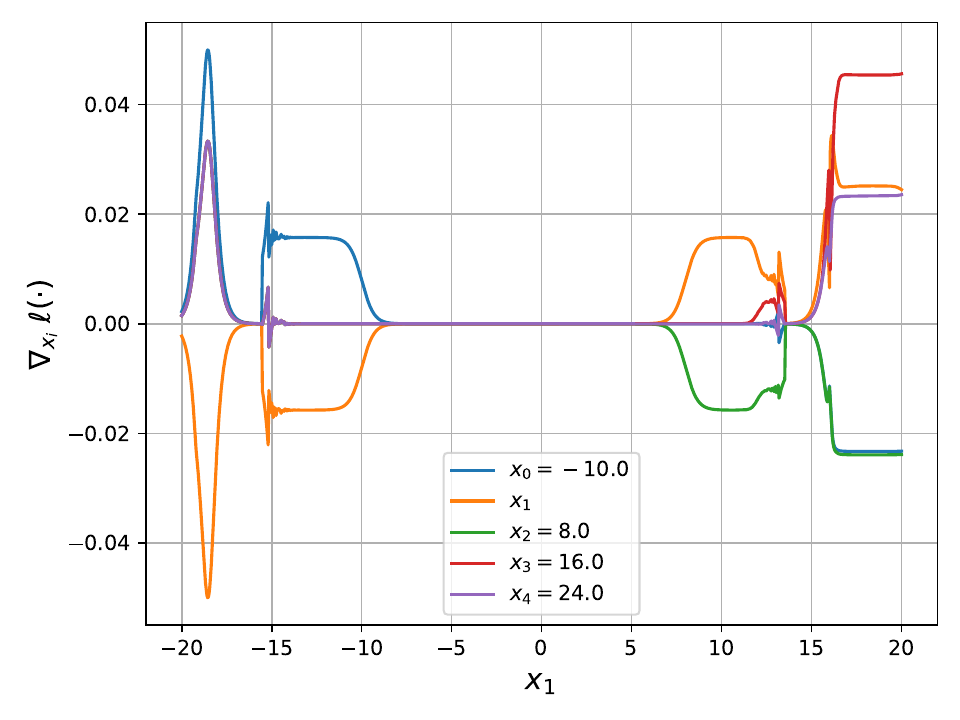}
        \caption{Newton loss gradients (empirical Fisher) for NeuralSort (larger spread)}
        \label{fig:subfig5}
    \end{subfigure}
    \hfill
    \begin{subfigure}[b]{0.48\textwidth}
        \centering
        \includegraphics[width=\textwidth]{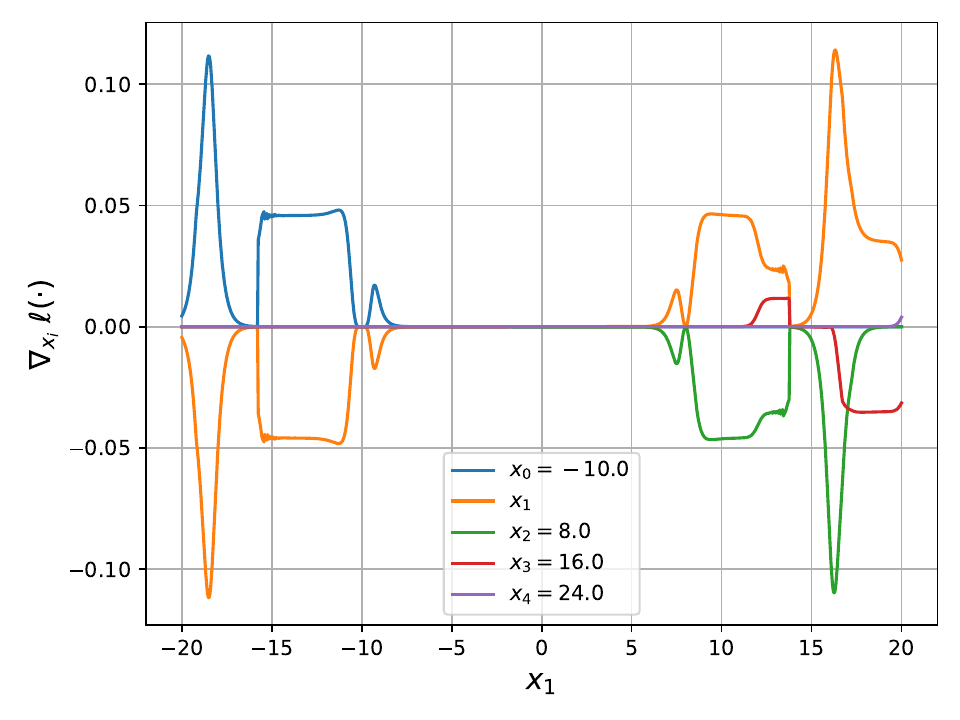}
        \caption{Newton loss gradients (empirical Fisher) for Logistic DSN (larger spread)}
        \label{fig:subfig6}
    \end{subfigure}
    \caption{We illustrate the gradient of the NeuralSort and logistic DSN losses in dependence of one of the five input dimensions for the $n=5$ case.
In the illustrated example, one can observe that both algorithms experience exploding gradients when the inputs are too far away from each other (which is also controllable via steepness $\beta$ / temperature $\tau$), see Figures (c) / (d).
Further, we can observe that the gradients themselves can be quite chaotic, making the optimization of the loss rather challenging.
In Figures (e) / (f), we illustrate how using the empirical Fisher Newton Loss recovers from the exploding gradients experienced in (c) / (d).
The input examples are a simplification of actual inputs, as here, $x_0, x_2, x_3, x_4$ are already in their correct order, and having multiple disagreements makes the loss more chaotic. 
}
    \label{fig:2x2}
\end{figure}

\end{document}